\newcommand{\lse}[1]{\log \sum_{#1} \exp}
\newcommand{\Expect}{\mathbb{E}}
\newcommand{\reals}{\mathbb{R}}
\newcommand{\Ac}{\mathcal{A}}
\newcommand{\Sc}{\mathcal{S}}
\newcommand{\Fc}{\mathcal{F}}
\newcommand{\argmax}{\mathop{\rm argmax}}
\newcommand{\eg}{\emph{e.g.}}
\newcommand{\ie}{\emph{i.e.}}
\newcommand{\etc}{\emph{etc.}}
\newcommand{\epregret}{\mathcal{R}}
\newcommand{\cumregret}{\mathcal{BR}}
\newcommand{\Error}{\mathrm{Optimism}}
\newcommand{\Dist}{\mathrm{Dist}}
\DeclarePairedDelimiterX{\infdivx}[2]{(}{)}{%
  #1\;\delimsize|\delimsize|\;#2%
}
\newcommand{\kld}[2]{\ensuremath{\mathrm{KL}\infdivx{#1}{#2}}\xspace}
\icmltitlerunning{~\hfill Efficient Exploration via Epistemic-Risk-Seeking Policy Optimization \hfill \thepage}
\begin{document}

\twocolumn[
\icmltitle{Efficient Exploration via Epistemic-Risk-Seeking Policy Optimization}

\begin{icmlauthorlist}
\icmlauthor{Brendan O'Donoghue}{dm}
\end{icmlauthorlist}

\icmlaffiliation{dm}{Google DeepMind, London}

\icmlcorrespondingauthor{Brendan O'Donoghue}{bodonoghue85@gmail.com}

\icmlkeywords{Reinforcement learning, Exploration, Bayes, Policy gradient}

\vskip 0.3in
]

\printAffiliationsAndNotice{}  %

\begin{abstract}
    Exploration remains a key challenge in deep reinforcement learning (RL). Optimism in the face of uncertainty is a well-known heuristic with theoretical guarantees in the tabular setting, but how best to translate the principle to deep reinforcement learning, which involves online stochastic gradients and deep network function approximators, is not fully understood. In this paper we propose a new, differentiable optimistic objective  that when optimized yields a policy that provably explores efficiently, with guarantees even under function approximation. Our new objective is a zero-sum two-player game derived from endowing the agent with an epistemic-risk-seeking utility function, which converts uncertainty into value and encourages the agent to explore uncertain states. We show that the solution to this game minimizes an upper bound on the regret, with the `players' each attempting to minimize one component of a particular regret decomposition. We derive a new model-free algorithm which we call `epistemic-risk-seeking actor-critic' (ERSAC), which is simply an application of simultaneous stochastic gradient ascent-descent to the game. Finally, we discuss a recipe for incorporating off-policy data and show that combining the risk-seeking objective with replay data yields a double benefit in terms of statistical efficiency. We conclude with some results showing good performance of a deep RL agent using the technique on the challenging `DeepSea' environment, showing significant performance improvements even over other efficient exploration techniques, as well as improved performance on the Atari benchmark.

\end{abstract}

\section{Introduction}
\label{s-intro}
Reinforcement learning (RL) involves an agent interacting with an environment over time attempting to maximize its total return \cite{sutton:book, puterman2014markov, meyn2022control}. Initially the agent does not know about the environment and must learn about it from experience. As the agent navigates the environment it receives noisy observations which it can use to update its (posterior) beliefs about the environment \cite{ghavamzadeh2015bayesian}. Therefore, the RL problem is a \emph{statistical inference problem} wrapped in a \emph{control problem}, and the two problems must be tackled simultaneously for good data efficiency \cite{lu2021reinforcement}. This is because the policy of the agent affects the data it will collect, which in turn affects the policy, and so on. This is in contrast to supervised learning, where the performance of a classifier (for instance) does not influence the data it will later observe. Failure to properly consider the statistical aspect of the RL problem will result in agents that require exponential amounts of experience for good performance. So far, deep RL as a field has largely accepted this tradeoff, requiring enormous computational budgets to solve relatively simple problems. On the other hand, correctly considering the statistical inference problem and the control problem together has the potential to dramatically reduce the compute requirements to solve problems and potentially unlock new domains and capabilities far outside of the range of current agents. 

Understood in this way, RL is about choosing what actions to take, and consequently which data to collect, in order to maximize long-term return. To do this an agent must sometimes take actions that lead to states where it has epistemic uncertainty about the value of those states, and sometimes take actions that lead to more certain payoff. The tension between these two modes is the `explore-exploit' dilemma \cite{auer2002using, kearns2002near, dimitrakakis2018decision}. When it comes to exploration in \emph{deep} RL there are two main focus areas of research. The primary line of work is generating better \emph{estimates of uncertainty}, typically by exploiting some aspect of a neural network \cite{singh2004intrinsically, barto2013intrinsic, stadie2015incentivizing, bellemare2016unifying, ostrovski2017count, burda2018exploration, pathak2017curiosity}. Getting accurate uncertainty estimates from deep neural networks is a `holy grail' of research in deep learning in general \cite{osband2021evaluating}, and in reinforcement learning good uncertainty estimates are crucial for good performance of any practical exploration algorithm. 
The second area of research is \emph{how best to use uncertainty estimates for efficient exploration}, which is the focus of this work and as such any of the referenced methods for generating uncertainty estimates are compatible with the approach discussed herein. A lot of prior work in this area has simply converted the uncertainty estimates into an `optimism in the face of uncertainty' bonus added to the rewards and then applied off-the-shelf RL algorithms to the modified Markov decision process (MDP) \cite{dayan1996exploration, strehl2008analysis, bellemare2016unifying, tang2017exploration}. This approach is inspired by theoretical results based on optimism bonuses which show that in an episodic tabular MDP setting where the modified MDP is solved exactly, %
these strategies can yield good regret bounds \cite{auer2008near, jaksch2010near, azar2017minimax, jin2018q}. 
However, translating the performance to deep RL has been challenging. Consider the fact that some of the most impressive results in modern deep RL have had no sophisticated exploration strategies, relying instead on simple local dithering strategies \cite{mnih-dqn-2015, silver2016mastering, berner2019dota} or making extensive use of human data \cite{vinyals2019grandmaster}. 

Although optimism is the most popular exploration technique in deep RL, there are several alternative approaches. One line of research is not to consider uncertainty explicitly, but instead to add some structured noise to dithering, such as L{\'e}vy flights \cite{dabney2020temporally}, or adding noise to the weights of the neural network \cite{fortunato2017noisy, plappert2017parameter}. These approaches have shown some promising results although they fall strictly into the category of heuristic and do not achieve good performance on challenging unit-test exploration domains like DeepSea \cite{osband2019behaviour}. Another line of research involves Thompson sampling and various approximations to it \cite{thompson1933likelihood, strens2000bayesian, osband2013more, russo2018tutorial, osband2016deep}. Although Thompson sampling has excellent performance in tabular settings it is not yet clear how to translate that performance into deep RL settings reliably as a full implementation of Thompson sampling requires sampling from the posterior over policies, which is intractable for all but the simplest tabular domains. Another drawback of Thompson sampling is that it cannot handle either the multi-agent case nor the constrained case \cite{o2020stochastic, o2021vbos}.
Since we expect real-world agents to be in situations with multiple agents and to be bound by constraints this is a major disadvantage.

In this paper we endow a policy-gradient agent with an epistemic-risk-seeking utility function which summarizes both the expected return and the epistemic uncertainty into a single value \cite{o2021klearning, eriksson2019epistemic}. How risk-seeking the agent is is controlled by a single scalar parameter which is tuned (\ie, learned) to balance exploration and exploitation. The approach is based on a \emph{dual} view of the recent `K-learning' algorithm, which is a value learning, model-based, Bayesian RL approach with a guaranteed Bayesian regret bound in tabular domains \cite{o2021klearning}. We derive a model-free and policy-based algorithm, which allows us to approximately solve for the optimal policy using stochastic feedback and online experience using policy gradients, and to use a deep neural network to parameterize our policy. Moreover, we can show that the approach enjoys Bayesian regret guarantees even in the face of function approximation. The final algorithm we present is an extension of policy gradient \cite{sutton1999policy, konda2003onactor} with entropy regularization. Combining policy gradient with entropy regularization is a common heuristic and typically a small amount of entropy `bonus' is used to discourage the policy from becoming deterministic and thereby losing the ability to `explore' \cite{mnih2016asynchronous}. That being said, simply adding entropy regularization is not sufficient for deep exploration since entropy regularization only encourages local dithering \cite{osband2016thesis, o2018uncertainty}. In this work we show that entropy regularization combined with a carefully tuned uncertainty bonus is a principled approach to deep exploration. Our approach formulates the problem as a two-player game where one player is attempting to find the policy that maximizes the optimistic reward and the other player is tuning how risk-seeking the policy is in order to minimize expected regret. The solution of this game yields the optimal K-learning policy with the associated performance guarantees. Unlike standard optimism approaches the K-learning policy is stationary (\ie, not dependent on the number of elapsed episodes other than through the posteriors) and stochastic, and it varies slowly as data is collected, which makes it more amenable to online approximation.  Unlike Thompson sampling, K-learning does not require a full sample from the posterior at each episode and it can handle both the multi-agent and the constrained cases when suitably modified \cite{o2020stochastic, o2021vbos}. In practice on a hard exploration unit-test our approach outperforms deep RL approximations to both Thompson sampling and optimism, as we shall show in the numerical experiments. Our results suggest that the approach in this manuscript may close some of the gap between theory and practice for efficient exploration in deep RL.

\section{Preliminaries}
\label{s-preliminaries}
We consider an RL problem where an agent interacts with an unknown environment over a number of episodes. We model the environment as a finite state-action time-inhomogeneous MDP given by the tuple $\mathcal{M} = (\Sc, \Ac, L, \{P_l\}_{l=1}^L, \{R_l\}_{l=1}^L, \rho)$, where $L$ is the horizon length, $\Sc = \Sc_1 \cup \cdots \cup \Sc_L \cup \{s_{L+1}\}$ is the set of states including terminating state $s_{L+1}$, $\Ac$ is the set of possible actions, $P_l: \Sc_l \times \Ac \rightarrow \Delta(\Sc_{l+1})$ denotes the state transition kernel at layer $l$, $R_l: \Sc_l \times \Ac \rightarrow \Delta(\reals)$ is the reward function at layer $l$ with mean reward $r_l \in \reals^{|\Sc_l|\times|\Ac|}$, and $\rho \in \Delta(\Sc_1)$ is the initial state distribution.  A policy $\pi \in \Delta(\Ac)^{|\Sc|}$ is a distribution over actions for each state, and we shall denote the probability of action $a$ in state $s$ at timestep $l$ as $\pi_l(s, a)$. The agent starts in some state $s_1 \in \Sc_1$ sampled according to $\rho$, then for each step in the episode $l=1,\ldots, L$ the agent is in state $s_l$, takes action $a_l \sim \pi_l(s_l, \cdot)$, receives reward sampled from $R_l(s_l, a_l)$, and transitions to state $s_{l+1} \in \Sc_{l+1}$ according to $P_l(\cdot \mid s_l, a_l)$. The episode ends when the terminating state $s_{L+1}$ is reached, the initial state is sampled again and another episode begins. 

For a given policy $\pi$ we define value functions for each $(s, a) \in \Sc_l \times \Ac$, $l=1, \ldots, L$, as
\begin{align*}
Q_l^\pi(s,a) &= r_l(s,a) + \sum_{s^\prime \in \Sc_{l+1}} P_l(s^\prime \mid s,a) V_{l+1}^\pi(s^\prime), \\
V_l^\pi(s) &= \sum_a \pi_l(s,a) Q_l^\pi(s,a),
\end{align*}
where we define $V^\pi_{L+1} \equiv 0$.
The optimal values are defined for $l=1, \ldots, L$ as
\begin{align*}
\label{e-opt-qv}
Q_l^\star(s,a) &= r_l(s,a) + \sum_{s^\prime \in \Sc_{l+1}} P_l(s^\prime\mid s,a) V_{l+1}^\star(s^\prime),
\\
V_l^\star(s) &= \max_a Q_l^\star(s,a),
\end{align*}
and we define $V^\star_{L+1} \equiv 0$. The policy that achieves the max is given by
\[
\pi_l^\star(s, a) = \mathbf{1}(a = \argmax(Q_l^\star(s,a))), \quad l=1,\ldots, L,
\]
assuming the argmax is unique, otherwise any policy that has support only on the maximum entries of $Q^\star$ is optimal.

\subsection{Regret}
\label{s-regret}
The regret of a policy is the expected shortfall between the performance of the policy and the optimal performance. In this paper we take a Bayesian approach, which is to say we assume the agent has access to prior information about the MDP, represented by a distribution $\phi$, and we are interested in the expected regret with respect to this prior. Concretely, for a policy $\pi$ we define the Bayesian regret for a single episode as
\[
\epregret(\pi, \phi) = \Expect_{\mathcal{M} \sim \phi}(\Expect_{s \sim \rho} (V_1^{\mathcal{M},\star}(s) - V_1^{\mathcal{M},\pi}(s))).
\]
For clarity we have made the dependence of the value functions on $\mathcal{M}$ explicit here, but we shall suppress the dependency in the notation hereafter.
If algorithm $\mathrm{Alg}$ produces policy sequence $\pi^1, \pi^2, \ldots$ based on observed histories $\Fc_1, \Fc_2, \ldots$, where $\Fc_t$
is all the observed history of states, actions, and rewards before episode $t$ 
then, due to the tower property of conditional expectation, the \emph{cumulative Bayesian regret} of $\mathrm{Alg}$ over $N$ episodes is given by
\begin{equation}
\label{e-bayes-regret}
\cumregret(\mathrm{Alg}, \phi) =\Expect \sum_{t=1}^N \epregret(\pi^t, \phi^t) 
\end{equation}
where $\phi^t = \phi(\cdot \mid \Fc_t)$. Loosely speaking, agents that have low regret explore efficiently and generate high reward. So minimizing the cumulative Bayesian regret is important for good performance.

\section{K-Learning}
For the value functions in \S \ref{s-preliminaries} to be computable they require exact knowledge of the mean reward $r$ and transition matrix $P$. However, in reinforcement learning these are initially unknown and must be learned about from experience. K-learning was derived by endowing the agent with a \emph{risk-seeking} exponential utility function $u: \reals \rightarrow \reals$ which converts uncertainties to value, defined for any $\tau \geq 0$ as $u_\tau(x) = \tau (\exp(x / \tau) - 1)$.
We can compute the \emph{certainty-equivalent} value under this utility for any random variable $X: \Omega \rightarrow \reals$ as $J_\tau = u_\tau^{-1} (\Expect u_\tau(X)) = \tau \log \Expect \exp (X / \tau)$, and from Jensen's inequality we have $J_\tau \geq \Expect X$ for all $\tau \geq 0$. For example, random variable $X \sim \mathcal{N}(\mu, \sigma^2)$ has certainty equivalent value under $u_\tau$ of $J_\tau = \mu + \sigma^2 / 2 \tau$. Clearly greater uncertainty (or risk) $\sigma$ increases this value, and $\tau \geq 0$ controls the tradeoff. 
In the context of reinforcement learning the uncertainty we are interested in is the \emph{epistemic} uncertainty about the unknown parameters of the MDP, and the risk-seeking utility function can be used to summarize the beliefs about an unknown MDP into a risk-seeking value. As shown by \citet{o2021klearning} the risk-seeking values are computable by solving a Bellman equation. Concretely, given posterior information $\phi$ we define the `risk-seeking' reward function for each $(s, a) \in \Sc_l \times \Ac$, $l=1, \ldots, L$, as
\[
r_{l, \tau}(s,a) = \bar r_l(s,a) + \sigma_l^2(s,a)/2\tau,
\]
where $\bar r = \Expect_\phi r$ and $\sigma \in \reals^{|\Sc|\times|\Ac|}$ is a measure of the uncertainty about the MDP under $\phi$ (see 
\citet{o2021klearning} for details on how $\sigma$ should be chosen, for now we shall just assume it is given).
Then for any policy $\pi$ and constant $\tau > 0$ we define risk-seeking value functions for each $(s, a) \in \Sc_l \times \Ac$, $l=1, \ldots, L$, as
\[
K_{l, \tau}^\pi(s,a) = r_{l, \tau}(s,a) + \sum_{s^\prime \in \Sc_{l+1}} \bar P_l(s^\prime\mid s,a) J_{l+1, \tau}^\pi(s^\prime),
\]
\begin{equation}
\label{e-kpi}
J_{l, \tau}^\pi(s) = \sum_a \pi_l(s,a) K_{l, \tau}^\pi(s,a) + \tau H(\pi_l(s, \cdot)),
\end{equation}
where $\bar P = \Expect_\phi P$ and $H$ denotes the entropy \cite{cover2012elements}, and we define $J^\pi_{L+1, \tau} \equiv 0$.
Similarly to the optimal Q-values, we can define optimal K-values for each $(s, a) \in \Sc_l \times \Ac$, $l=1, \ldots, L$, and any $\tau > 0$ as follows
\[
K_{l, \tau}^\star(s,a) = r_{l, \tau}(s,a) + \sum_{s^\prime \in \Sc_{l+1}} \bar P_l(s^\prime\mid s,a) J_{l+1, \tau}^\star(s_l^\prime),
\]
where
\begin{align*}
\label{e-jstar}
J_{l, \tau}^\star(s) &=
\max_{\pi_l \in \Delta(\Ac)}\left( \sum_a \pi_l(s,a) K_{l, \tau}^\star(s,a) +  \tau H(\pi_l(s,\cdot))\right) \\&= \tau \lse{a \in \Ac} (K_{l, \tau}^\star(s, a) / \tau),
\end{align*}
where again we define $J_{L+1, \tau}^\star \equiv 0$.
The policy that achieves the max is given by the `Boltzmann' policy over the K-values, that is, for each $(s, a) \in \Sc_l \times \Ac$, $l=1, \ldots, L$
\begin{equation}
\label{e-k-policy}
\pi_{l, \tau}^\star(s, a) = \exp\left( \frac{K_{l, \tau}^\star(s,a) - J_{l, \tau}^\star(s)}{\tau} \right).
\end{equation}
Observe that if the agent has no uncertainty (\ie, $\sigma = 0$), then letting $\tau \rightarrow 0$ recovers the original $Q$ and $V$ formulations in \S \ref{s-preliminaries}.
The risk-seeking Bellman equation captures both the expected value and the uncertainty, and both propagate through the MDP to other states and actions. It is the `temperature' parameter $\tau$ that is controlling the trade-off between them. So far $\tau$ is a free-variable, in the sequel we shall show how to optimize it so as to minimize regret.

The main result of \citet{o2021klearning} is that following the policy in Eq.~\eqref{e-k-policy} guarantees a sublinear Bayesian regret bound for appropriate choices of $\sigma$ and $\tau$. In other words, the policy associated with the optimal K-values balances exploration and exploitation efficiently. However, finding the policy requires solving a Bellman equation for the optimal K-values and the analysis was restricted to tabular cases. This paper builds on that work in three main ways:
\vspace{-2mm}
\begin{enumerate}
\setlength\itemsep{0em}
    \item We present a new objective over policies, rather than values, which can be solved using policy gradients to obtain the policy in Eq.~\eqref{e-k-policy}.
    \item The algorithm we derive is entirely model-free, whereas the previous work was model-based.
    \item We extend the analysis and experiments to cover non-tabular and function approximation cases.
\end{enumerate}
\vspace{-2mm}
All the quantities we presented in this section are functions of the current beliefs $\phi$, however, for brevity we have suppressed this dependence in the notation.

\section{Saddle-Point Problem}
If we assume that the posterior over the reward and transition functions are layerwise-independent, then it is straightforward to show that for any $\tau \geq 0$ and for $l=1, \ldots, L$
\[
K_{l, \tau}^\pi(s,a) \geq \Expect_\phi Q_l^\pi(s,a), \quad 
J_{l, \tau}^\pi(s) \geq \Expect_\phi V_l^\pi(s).
\]
Furthermore, in \cite{o2021klearning} it was shown that under some additional assumptions the optimal values satisfy for $l=1, \ldots, L$
\[
K_{l, \tau}^\star(s,a) \geq \Expect_\phi Q_l^\star(s,a), \quad
J_{l, \tau}^\star(s) \geq \Expect_\phi V_l^\star(s)
\]
for an appropriate choice of $\sigma$ and any $\tau \geq 0$.
This means that the K-values are \emph{optimistic}, and following policy \eqref{e-k-policy} is an instance of optimism in the face of uncertainty. For our purposes in this paper we shall assume the following bound holds.
\begin{restatable}{ass}{ass1}
\label{ass-upper-bound}
$\Expect_{s \sim \rho} J^\star_{1, \tau}(s) \geq \Expect_{s \sim \rho} \Expect_\phi V_1^\star(s), \ \forall \ \tau \geq 0$.
\end{restatable}
Under Assumption~\ref{ass-upper-bound}, finding the \emph{tightest} bound in the family requires solving $\min_\tau \Expect_{s \sim \rho} J_{1, \tau}^\star(s)$, and since for any $\tau$ we have $\max_\pi \Expect_{s \sim \rho} J_{1,\tau}^\pi(s) = \Expect_{s \sim \rho} J_{1,\tau}^\star(s)$, we obtain the following saddle-point problem:
\begin{equation}
\label{e-saddle}
    \max_{\pi \in \Pi} \min_{\tau \geq 0} \Expect_{s \sim \rho}  J_{1,\tau}^\pi(s)
\end{equation}
where $\Pi \subseteq \Delta(|\Ac|)^{|\Sc|}$ is some possibly restricted policy space.
The solution to this saddle-point problem yields the tightest upper-bound on the expected value of the optimal value function $\Expect_{s \sim \rho} \Expect_\phi V_1^\star(s)$ under the posterior $\phi$, and, as we shall show, it also minimizes a bound on the Bayesian regret. Implicit in the definition of the saddle-point problem is the assumption of strong duality, which we state next. This assumption holds, for instance, if $\Pi$ is convex.
\begin{restatable}{ass}{ass2}
\label{ass-strong-duality}
Strong duality holds for \eqref{e-saddle}, \ie,
\[
\min_{\tau \geq 0} \max_{\pi \in \Pi} \Expect_{s \sim \rho}  J_{1,\tau}^\pi(s) =  \max_{\pi \in \Pi} \min_{\tau \geq 0} \Expect_{s \sim \rho}  J_{1,\tau}^\pi(s).
\]
\end{restatable}
The saddle-point problem \eqref{e-saddle} is our main problem of interest, and the rest of this manuscript is dedicated to solving it and interpreting the solutions.

\subsection{The connection to Bayesian regret}
In order to provide a connection between the saddle-point problem \eqref{e-saddle} and the Bayesian regret \eqref{e-bayes-regret} let us define a few quantities of interest. First, we define a notion of \emph{optimism} for a given $\pi$ and $\tau \geq 0$
\[
\Error(\pi, \tau) \coloneqq \Expect_{s \sim \rho} \left(J_{1, \tau}^{\pi}(s) - \Expect_\phi V_1^\pi(s) \right).
\]
Since $J_{1, \tau}^{\pi}(s) \geq \Expect_\phi V_1^\pi(s)$ for all $\tau$, the $\Error$ is measuring how much `bonus' is derived from the risk-seeking exponential utility, relative to the (risk-neutral) expected value.
Let us also define a notion of `distance' from a policy to the optimal optimistic policy as the expected KL-divergence between the policies under the stationary distribution generated by $\pi$ \cite{cover2012elements}, that is,
\[
\Dist(\pi, \tau) \coloneqq  \sum_{l=1}^L \tau \Expect_{\pi} \kld{\pi_l(s, \cdot)}{\pi_{l, \tau}^\star(s, \cdot)}.
\]
It turns out that we can relate the KL-divergence and the suboptimality gap for any policy \cite{bregman_rl, mei2020global}.
\begin{restatable}{lem}{kl}[Cor.~1 \cite{bregman_rl}]
\label{l-kl}
For any $\tau > 0 $ and policy $\pi \in \Delta(\Ac)^{|\Sc|}$ we have:
\[
\Dist(\pi,  \tau) = \Expect_{s \sim \rho} (J_{1, \tau}^\star(s) - J_{1, \tau}^\pi(s)).
\]
\end{restatable}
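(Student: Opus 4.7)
The plan is a standard entropy-regularized performance-difference argument, unrolled over the horizon. The key identity that makes everything fall out is that the optimal Boltzmann policy \eqref{e-k-policy} rewrites as $K_{l,\tau}^\star(s,a) = J_{l,\tau}^\star(s) + \tau \log \pi_{l,\tau}^\star(s,a)$, so that multiplying by $\pi_l(s,a)$ and summing over $a$ gives
\[
\sum_a \pi_l(s,a) K_{l,\tau}^\star(s,a) + \tau H(\pi_l(s,\cdot)) = J_{l,\tau}^\star(s) - \tau \kld{\pi_l(s,\cdot)}{\pi_{l,\tau}^\star(s,\cdot)}.
\]
This is the crucial link between the suboptimality gap at state $s$ and the KL divergence to the optimal Boltzmann policy at $s$.

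Next I would carry out a one-step performance-difference decomposition. Writing $\Delta_l(s) := J_{l,\tau}^\star(s) - J_{l,\tau}^\pi(s)$ and subtracting the definition \eqref{e-kpi} of $J_{l,\tau}^\pi(s)$ from the identity above yields
\[
\Delta_l(s) = \tau \kld{\pi_l(s,\cdot)}{\pi_{l,\tau}^\star(s,\cdot)} + \sum_a \pi_l(s,a)\bigl(K_{l,\tau}^\star(s,a) - K_{l,\tau}^\pi(s,a)\bigr).
\]
Then I plug in the two Bellman recursions for $K^\star$ and $K^\pi$ (which share the same risk-seeking reward $r_{l,\tau}$ and the same mean transition kernel $\bar P_l$), so the $r_{l,\tau}$ terms cancel and we are left with
\[
K_{l,\tau}^\star(s,a) - K_{l,\tau}^\pi(s,a) = \sum_{s'} \bar P_l(s'\mid s,a)\,\Delta_{l+1}(s').
\]

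The final step is to unroll this recursion from $l=1$ to $l=L$, using the terminal condition $J_{L+1,\tau}^\star \equiv J_{L+1,\tau}^\pi \equiv 0$, i.e.\ $\Delta_{L+1} \equiv 0$. Taking expectation over $s \sim \rho$ and observing that at each unrolling step we push the state distribution forward by one step under $\pi$ composed with $\bar P_l$, the sum of the accumulated KL terms is exactly $\sum_{l=1}^L \tau\,\Expect_\pi \kld{\pi_l(s,\cdot)}{\pi_{l,\tau}^\star(s,\cdot)} = \Dist(\pi,\tau)$, where $\Expect_\pi$ denotes the expectation under the state-visitation distribution of $\pi$ in the mean MDP with transitions $\bar P$. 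This gives the claimed equality.

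The only real care needed is bookkeeping: making sure the state-occupancy measure appearing in $\Dist$ is consistent with the one generated by the telescoping (it is, because both $K^\star$ and $K^\pi$ are defined with the same $\bar P$), and that the entropy term $\tau H(\pi_l)$ is correctly absorbed into the KL via the Boltzmann identity above. There is no deep obstacle; the argument is just the entropy-regularized analogue of the standard performance difference lemma, so I expect the proof to be short once the Boltzmann rewriting is in hand.
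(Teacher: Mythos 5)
Your proof is correct: the Boltzmann rewriting $K_{l,\tau}^\star(s,a) = J_{l,\tau}^\star(s) + \tau \log \pi_{l,\tau}^\star(s,a)$, the resulting one-step identity $\Delta_l(s) = \tau\,\kld{\pi_l(s,\cdot)}{\pi_{l,\tau}^\star(s,\cdot)} + \sum_a \pi_l(s,a)\bigl(K_{l,\tau}^\star(s,a) - K_{l,\tau}^\pi(s,a)\bigr)$, and the unrolling under the shared mean kernel $\bar P$ with $\Delta_{L+1}\equiv 0$ constitute exactly the standard entropy-regularized performance-difference argument that underlies this result. Note the paper does not prove this lemma inline but defers to Cor.~1 of \cite{bregman_rl}; your argument is essentially that proof, and you correctly handle the one genuine subtlety, namely that the occupancy measure appearing in $\Dist$ must be the one induced by $\pi$ under the mean transitions $\bar P$, which is consistent because both $K^\star$ and $K^\pi$ are defined with the same $\bar P$.
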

With this we are ready to present the following decomposition.
\begin{restatable}{lem}{ldecomp}
\label{l-decomp}
Under Assumption~\ref{ass-upper-bound} we can bound the Bayesian regret in a single episode for any policy $\pi$ as
\[
\epregret(\pi, \phi) \leq \Dist(\pi, \tau) + \Error(\pi, \tau), \quad \forall \tau \geq 0.
\]
\end{restatable}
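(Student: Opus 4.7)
The plan is to simply add the definitions of $\Dist$ and $\Error$, observe that the intermediate $J^\pi_{1,\tau}$ terms cancel, and then apply Assumption~\ref{ass-upper-bound} to dominate the resulting expression by the Bayesian regret.

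First I would rewrite $\Dist(\pi,\tau)$ using Lemma~\ref{l-kl}, giving $\Dist(\pi,\tau) = \Expect_{s\sim\rho}(J^\star_{1,\tau}(s) - J^\pi_{1,\tau}(s))$. Adding this to $\Error(\pi,\tau) = \Expect_{s\sim\rho}(J^\pi_{1,\tau}(s) - \Expect_\phi V^\pi_1(s))$ causes the $J^\pi_{1,\tau}$ terms to telescope, yielding
\[
\Dist(\pi,\tau) + \Error(\pi,\tau) = \Expect_{s\sim\rho}\bigl(J^\star_{1,\tau}(s) - \Expect_\phi V^\pi_1(s)\bigr).
\]

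Next I would invoke Assumption~\ref{ass-upper-bound}, which states that $\Expect_{s\sim\rho} J^\star_{1,\tau}(s) \geq \Expect_{s\sim\rho}\Expect_\phi V^\star_1(s)$ for every $\tau \geq 0$. Substituting into the previous display and swapping the order of expectation (using Fubini, since both the prior and initial-state distribution are independent of the randomness in the other) gives
\[
\Dist(\pi,\tau) + \Error(\pi,\tau) \geq \Expect_\phi \Expect_{s\sim\rho}\bigl(V^\star_1(s) - V^\pi_1(s)\bigr) = \epregret(\pi,\phi),
\]
which is exactly the claimed bound. Since Assumption~\ref{ass-upper-bound} holds for all $\tau \geq 0$, so does the inequality.

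There is essentially no hard step here once the two preceding definitions and Lemma~\ref{l-kl} are in hand; the proof is a one-line cancellation followed by a single application of the optimism assumption. The only subtlety worth flagging explicitly is being careful that the expectation over $\phi$ in the definition of $\Error$ is the posterior expectation used throughout the section, and that the $V^\star_1$ appearing in $\epregret$ is the value of the (random) true MDP $\mathcal{M}\sim\phi$, so that Assumption~\ref{ass-upper-bound} is being applied with the correct interpretation of $\Expect_\phi V^\star_1$.
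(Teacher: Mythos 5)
Your proof is correct and is essentially the paper's own argument run in reverse: the paper starts from $\epregret(\pi,\phi)$, applies Assumption~\ref{ass-upper-bound} to replace $\Expect_\phi V^\star_1$ with $J^\star_{1,\tau}$, and then adds and subtracts $J^\pi_{1,\tau}$ to identify the $\Dist$ and $\Error$ terms (the identification of $\Expect_{s\sim\rho}(J^\star_{1,\tau}-J^\pi_{1,\tau})$ with $\Dist$ being exactly Lemma~\ref{l-kl}, just as you use it). Your cancellation-then-assumption ordering and the paper's assumption-then-splitting ordering are the same two steps, so there is nothing substantive to distinguish.
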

The proof is deferred to Appendix \ref{s-proofs}. This lemma shows that we can decompose the Bayesian regret bound into two terms. One term is a distance from the policy to the optimal optimistic policy, and the other term relates to the amount of optimism in the policy. Next we show how the saddle-point problem we are solving \eqref{e-saddle} relates to this decomposition.
\begin{restatable}{thm}{thmmain}
\label{t-main}
Assume~\ref{ass-upper-bound} and~\ref{ass-strong-duality}, and
let $(\pi_*, \tau_*)$ be a solution to the saddle-point problem \eqref{e-saddle}, then
\begin{align*}
\epregret(\pi_*, \phi) %
&\leq \min_{\pi \in \Pi} \Dist(\pi,  \tau_*) +  \min_{\tau} \Error(\pi_*, \tau).
\end{align*}
\end{restatable}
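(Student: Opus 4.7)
The plan is to instantiate the decomposition from Lemma~\ref{l-decomp} at the saddle-point pair $(\pi_*, \tau_*)$ and then invoke the saddle-point optimality conditions --- one on each coordinate of $(\pi,\tau)$ --- to show that each summand equals the corresponding minimum appearing in the theorem's right-hand side. Lemma~\ref{l-kl} serves as the bridge: it rewrites the KL-based quantity $\Dist(\pi, \tau_*)$ in terms of the $J$-value suboptimality of $\pi$ against $\pi^\star_{\tau_*}$, which is exactly what the maximizing player in \eqref{e-saddle} is controlling.

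Concretely, I would start by applying Lemma~\ref{l-decomp} with $\tau = \tau_*$ to obtain $\epregret(\pi_*, \phi) \leq \Dist(\pi_*, \tau_*) + \Error(\pi_*, \tau_*)$. For the $\Error$ term the key observation is that
\[
\Error(\pi_*, \tau) = \Expect_{s\sim\rho} J^{\pi_*}_{1, \tau}(s) - \Expect_{s\sim\rho} \Expect_\phi V_1^{\pi_*}(s),
\]
and the second summand does not depend on $\tau$; since the saddle-point condition on the minimizing coordinate gives $\tau_* \in \argmin_{\tau \geq 0} \Expect_{s \sim \rho} J^{\pi_*}_{1, \tau}(s)$, we conclude $\Error(\pi_*, \tau_*) = \min_\tau \Error(\pi_*, \tau)$. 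For the $\Dist$ term I would apply Lemma~\ref{l-kl} to rewrite $\Dist(\pi, \tau_*) = \Expect_{s\sim\rho}(J^\star_{1, \tau_*}(s) - J^\pi_{1, \tau_*}(s))$, so that minimizing over $\pi \in \Pi$ is equivalent to maximizing $\Expect_{s\sim\rho} J^\pi_{1,\tau_*}(s)$ over $\Pi$. The saddle-point condition on the maximizing coordinate says $\pi_*$ achieves this maximum, hence $\Dist(\pi_*, \tau_*) = \min_{\pi\in\Pi} \Dist(\pi, \tau_*)$. Combining the two identities with the decomposition inequality yields the claim.

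The role of the two assumptions is essentially to set up the argument rather than to do real work inside it: Assumption~\ref{ass-upper-bound} is what lets Lemma~\ref{l-decomp} apply in the first place, while Assumption~\ref{ass-strong-duality} is what allows us to treat $(\pi_*, \tau_*)$ as a genuine saddle-point, so that both coordinatewise optimality inequalities can be invoked simultaneously on the same pair. The only conceptual step --- and hence the only potential obstacle --- is recognizing that the $J$-value suboptimality controlled by the maximizing player is the same quantity, after applying Lemma~\ref{l-kl}, as the $\Dist$ term in the regret decomposition; once that correspondence is made explicit, each coordinate of the saddle-point cleanly matches one of the two minima on the right, and the remainder is a line of algebra.
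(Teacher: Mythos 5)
Your proposal is correct and follows essentially the same route as the paper: both arguments reduce to the observation that, by strong duality, the saddle point $(\pi_*,\tau_*)$ satisfies both coordinatewise optimality conditions $\pi_* \in \argmax_{\pi\in\Pi}\Expect_{s\sim\rho} J^\pi_{1,\tau_*}(s)$ and $\tau_* \in \argmin_{\tau}\Expect_{s\sim\rho} J^{\pi_*}_{1,\tau}(s)$, which via Lemma~\ref{l-kl} and the $\tau$-independence of $\Expect_\phi V_1^{\pi_*}$ turn $\Dist(\pi_*,\tau_*)$ and $\Error(\pi_*,\tau_*)$ into the two minima. The paper merely organizes the same algebra differently (equating two rewritings of the saddle-point value rather than instantiating Lemma~\ref{l-decomp} at $(\pi_*,\tau_*)$ term by term), so your version is a valid, if anything slightly cleaner, rendering of the identical argument.
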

We defer the proof to Appendix \ref{s-proofs}. The above Theorem tells us that even though the `players' are competing in a zero-sum game, they are in a sense cooperating to minimize the Bayesian regret of the resulting policy. The solutions to the saddle-point problem \eqref{e-saddle} are each minimizing one component that contributes to the Bayesian regret bound in the decomposition we derived in Lemma \ref{l-decomp}, and ignoring the other.
In summary, we can interpret the saddle point problem as follows:
\vspace{-3mm}
\begin{itemize}
\setlength\itemsep{0em}
    \item The policy player $\pi$ is maximizing the entropy-regularized optimistic reward, where the amount of optimism is controlled by $\tau$. Equivalently, it is minimizing the expected KL-divergence to the optimal optimistic policy, and thereby minimizing one component contributing to the regret bound. 
    \item The risk-seeking player $\tau$ is balancing the reward bonus and entropy regularization in order to minimize the upper bound on the value function under $\pi$. Equivalently, it is minimizing the amount of optimism in the policy, and thereby minimizing the other component contributing to the regret bound. %
\end{itemize}
\vspace{-3mm}

Next we show a concentration result for the optimism term.
\vspace{-3mm}
\begin{restatable}{lem}{loptregret}%
\label{l-opt-regret}
Assume that the priors are layerwise-independent and that the uncertainty at each state-action decays as $\sigma^2(s,a) = \sigma^2 / n(s,a)$ for some $\sigma > 0$, where $n(s,a$) is the visitation count of the agent to $(s,a)$. Then for any sequence of policies $\pi_t$, $t=1, \ldots, N$ after $T = N L$ timesteps we have
\[
\Expect \sum_{t=1}^N\min_\tau  \Error_{\phi^t}(\pi_t, \tau) %
\leq \tilde O(\sigma \sqrt{ |\Sc||\Ac|T}),
\]
where $\tilde O$ suppresses logarithmic terms.
\end{restatable}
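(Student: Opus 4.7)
The plan is to reduce $\min_\tau \Error(\pi,\tau)$ to a closed form of the shape $A/\tau + \tau B$, tune $\tau$ by AM--GM, then combine a Cauchy--Schwarz across episodes with a standard visitation-count potential argument. Concretely, the first step is to get a clean formula for the optimism. Layerwise independence of the posterior implies $\Expect_\phi[P_l V_{l+1}^\pi] = \bar P_l \cdot \Expect_\phi V_{l+1}^\pi$, so $\Expect_\phi Q_l^\pi$ and $\Expect_\phi V_l^\pi$ satisfy the same Bellman recursion as $K_{l,\tau}^\pi$ and $J_{l,\tau}^\pi$ but with $r_l$ in place of $r_{l,\tau}$ and no entropy term. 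Subtracting the two recursions and unrolling from $l=1$ to $L$ under $\bar P$ gives
\[
J_{1,\tau}^\pi(s) - \Expect_\phi V_1^\pi(s) = \Expect_{\pi,\bar P}\!\left[\sum_{l=1}^L \Bigl(\tfrac{\sigma_l^2(s_l,a_l)}{2\tau} + \tau H(\pi_l(s_l,\cdot))\Bigr)\,\Big|\,s_1=s\right],
\]
so $\Error(\pi,\tau) = A(\pi)/\tau + \tau B(\pi)$ where $A(\pi) \coloneqq \tfrac12 \Expect_{\rho,\pi,\bar P}\sum_l \sigma_l^2(s_l,a_l)$ and $B(\pi) \coloneqq \Expect_{\rho,\pi,\bar P}\sum_l H(\pi_l(s_l,\cdot))$.

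The second step is to optimize in $\tau$. By AM--GM, $\min_{\tau \ge 0}(A/\tau + \tau B) = 2\sqrt{AB}$; using the crude bound $H(\pi_l(s,\cdot)) \le \log|\Ac|$ gives $B(\pi) \le L\log|\Ac|$, and therefore
\[
\min_{\tau \ge 0}\Error(\pi,\tau) \le \sqrt{2L\log|\Ac|}\,\sqrt{\Expect_{\rho,\pi,\bar P}\textstyle\sum_l \sigma_l^2(s_l,a_l)}.
\]
Summing over episodes and applying Cauchy--Schwarz in the episode index yields
\[
\sum_{t=1}^N \min_\tau \Error_{\phi^t}(\pi_t,\tau) \le \sqrt{2NL\log|\Ac|}\,\sqrt{\sum_{t=1}^N\Expect_{\rho,\pi_t,\bar P^t}\textstyle\sum_l \sigma_{t,l}^2(s_l,a_l)}.
\]

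The third step is the potential-style bound on the aggregated uncertainty. Substituting $\sigma_{t,l}^2(s,a) = \sigma^2/n_t(s,a)$, I want to show
\[
\Expect\sum_{t=1}^N \Expect_{\rho,\pi_t,\bar P^t}\sum_{l=1}^L \frac{1}{n_t(s_l,a_l)} \;\lesssim\; |\Sc||\Ac|\log T.
\]
Here I invoke the Bayesian-expectation identity that the expected state-action occupancy at episode $t$ under the imagined model $\bar P^t$ equals the expected occupancy under the true $P$ (by the tower property, since conditional on $\Fc_t$ the true $P$ has mean $\bar P^t$), so the left-hand side reduces to $\Expect \sum_{t,l} 1/n_t(s_l^t,a_l^t)$, which in turn equals $\sum_{(s,a)}\Expect\sum_{k=1}^{n_N(s,a)} 1/k \le |\Sc||\Ac|(1+\log T)$ by the standard harmonic-sum argument. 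Plugging back in, taking outer expectation (pushing it through the square root using Jensen in the direction $\Expect\sqrt{X}\le\sqrt{\Expect X}$), and using $T=NL$, one obtains
\[
\Expect\sum_{t=1}^N \min_\tau \Error_{\phi^t}(\pi_t,\tau) \le \sigma\sqrt{2T\log|\Ac|\cdot|\Sc||\Ac|\log T} = \tilde O(\sigma\sqrt{|\Sc||\Ac|T}).
\]

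The main obstacle is the third step: carefully justifying that the $1/n_t$ sum under the imagined-trajectory measure $\bar P^t$ (which is what naturally appears in $\Expect_{\pi_t,\bar P^t}\sigma_{t,l}^2$) enjoys the same $|\Sc||\Ac|\log T$ bound that one gets when summing over actually realized visits. This is the usual subtle point in Bayesian regret analysis and should follow from the tower property combined with layerwise independence, mirroring the corresponding lemma in \citet{o2021klearning}; the remaining steps are routine AM--GM, Cauchy--Schwarz and Jensen manipulations.
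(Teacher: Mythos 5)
Your proposal is correct and follows essentially the same route as the paper: the same layerwise-independence unrolling giving $\Error(\pi,\tau)=A(\pi)/\tau+\tau B(\pi)$, the same $\log|\Ac|$ entropy bound, and the same pigeonhole/harmonic-sum visitation-count argument (for which the paper cites Lemma 6 of \citet{o2021klearning}). The only deviation is cosmetic: the paper bounds each $\min_\tau$ by plugging a single fixed $\tau_N=\tilde O(\sigma\sqrt{|\Sc||\Ac|/(LN)})$ into every episode, whereas you optimize $\tau$ per episode via AM--GM and aggregate with Cauchy--Schwarz and Jensen --- these are equivalent up to constants, and your explicit tower-property justification for passing from the imagined-measure ($\bar P^t$) occupancy to realized visit counts makes rigorous a step the paper's proof leaves implicit.
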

The proof is included in Appendix \ref{s-proofs}. Note that the above holds for any sequence of policies and has no dependence on the feasible policy set $\Pi$. This lemma tells us that under any sequence of policies, under the optimal choice of $\tau$ the expected cumulative sum of the $\Error$ terms grows sub-linearly. 
If $\Pi = \Delta(\Ac)^{|\Sc|}$, then the optimal policy satisfies $\Dist(\pi, \tau) = 0$ in the above bound and corresponds exactly to the K-learning policy in Eq.~\eqref{e-k-policy}, so have the we following corollary.%
\begin{restatable}{cor}{basic}
\label{c-basic}
Assume~\ref{ass-upper-bound} and~\ref{ass-strong-duality} and let $\Pi = \Delta(\Ac)^{|\Sc|}$. If algorithm $\mathrm{Alg}$ produces the policy that solves the saddle-point \eqref{e-saddle} for each episode $t$ then
after $T = NL$ timesteps
\[
\cumregret(\mathrm{Alg}, \phi) \leq \tilde O(\sigma \sqrt{ |\Sc||\Ac|T}).
\]
\end{restatable}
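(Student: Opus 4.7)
The plan is to chain the ingredients already in place: Lemma~\ref{l-decomp} gives a per-episode bound, Theorem~\ref{t-main} says the saddle-point solution minimizes each component of that bound separately, and Lemma~\ref{l-opt-regret} controls the cumulative optimism over episodes. First I would unfold the definition of cumulative Bayesian regret from Eq.~\eqref{e-bayes-regret}, writing
\[
\cumregret(\mathrm{Alg}, \phi) = \Expect \sum_{t=1}^N \epregret(\pi_*^t, \phi^t),
\]
where $\pi_*^t$ is the policy player's part of a saddle-point solution to \eqref{e-saddle} computed with respect to the posterior $\phi^t$, paired with the temperature $\tau_*^t$.

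Next I would apply Theorem~\ref{t-main} episode-by-episode (with the posterior $\phi^t$ playing the role of $\phi$) to obtain
\[
\epregret(\pi_*^t, \phi^t) \leq \min_{\pi \in \Pi} \Dist_{\phi^t}(\pi, \tau_*^t) + \min_{\tau} \Error_{\phi^t}(\pi_*^t, \tau).
\]
The key observation is that for the unrestricted choice $\Pi = \Delta(\Ac)^{|\Sc|}$, the K-learning policy of Eq.~\eqref{e-k-policy} evaluated at temperature $\tau_*^t$ belongs to $\Pi$, and by Lemma~\ref{l-kl} it satisfies $\Dist_{\phi^t}(\pi_{\tau_*^t}^\star, \tau_*^t) = 0$ since $J_{1,\tau}^\star$ is precisely the maximum of $J_{1,\tau}^\pi$ over unrestricted policies. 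Therefore $\min_{\pi \in \Pi} \Dist_{\phi^t}(\pi, \tau_*^t) = 0$, and the per-episode bound collapses to $\epregret(\pi_*^t, \phi^t) \leq \min_\tau \Error_{\phi^t}(\pi_*^t, \tau)$.

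Finally I would sum over $t$, take expectations, and invoke Lemma~\ref{l-opt-regret} (which holds for any sequence of policies, so in particular for $\{\pi_*^t\}$) to conclude
\[
\cumregret(\mathrm{Alg}, \phi) \leq \Expect \sum_{t=1}^N \min_\tau \Error_{\phi^t}(\pi_*^t, \tau) \leq \tilde O\!\left(\sigma \sqrt{|\Sc||\Ac|T}\right),
\]
which is the claimed bound. The only step requiring any care is the reduction of the $\Dist$ term to zero: I need $\Pi$ to contain the Boltzmann K-policy, which is automatic when $\Pi$ is the full simplex but is exactly where the restriction in Corollary~\ref{c-basic} gets used. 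Beyond that, the argument is a clean composition of the preceding lemmas, so I would not expect any genuine obstacle — the heavy lifting (regret decomposition and optimism concentration) is done in Lemmas~\ref{l-decomp} and \ref{l-opt-regret}.
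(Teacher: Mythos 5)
Your proposal is correct and follows essentially the same route as the paper, which proves the corollary by combining Theorem~\ref{t-main} with the observation that for $\Pi = \Delta(\Ac)^{|\Sc|}$ the saddle-point policy coincides with the K-learning policy of Eq.~\eqref{e-k-policy}, making the $\Dist$ term vanish, and then summing the remaining optimism terms via Lemma~\ref{l-opt-regret}. Your explicit episode-by-episode application with posteriors $\phi^t$ and the use of Lemma~\ref{l-kl} to certify $\Dist_{\phi^t}(\pi^\star_{\tau_*^t}, \tau_*^t) = 0$ just spells out details the paper leaves implicit.
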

We can ensure that Assumption \ref{ass-upper-bound} holds in the case of bounded rewards, \ie, $|r| \leq 1$ a.s., by setting $\sigma = O(L)$, which recovers the bound in \citet{o2021klearning}.

\subsection{Function approximation}
In this manuscript we are interested in efficient reinforcement learning in non-tabular settings. In this case we must resort to using function approximators to parameterize the policy (or the value function) and we are interested in how well our function approximators will perform. Here we discuss the relationship between the capacity of the function approximator and the regret for our approach.

Consider the case where we are using an approximation architecture with feasible policy set $\Pi \subset \Delta(\Ac)^{|\Sc|}$ chosen such that we can guarantee that for any $\tau$
\[
\min_{\pi \in \Pi} \max_{s,l} \kld{\pi_l(s, \cdot)}{\pi^\star_{l, \tau}(s, \cdot)} \leq \epsilon / \tau,
\]
from which we have $\min_{\pi \in \Pi} \Dist(\pi, \tau) \leq \epsilon L$.
This might occur if, for instance, we have an approximation architecture that can approximate the value functions up to a small constant $\epsilon > 0$. Consider the regret of the policy $\pi_* = \argmax_{\pi \in \Pi} \min_\tau \Expect_{s \sim \rho} J_{1, \tau} ^\pi(s)$. In this case, using Theorem~\ref{t-main}, we can bound the per-episode Bayesian regret as
\[
\epregret(\pi_*, \phi) \leq \min_\tau \Error(\pi, \tau) + \epsilon L.
\]
and so the algorithm $\mathrm{Alg}_\Pi$ producing policies $\pi_*^t \in \Pi$, $t=1, \ldots, N$ enjoys bound
\begin{align*}
\cumregret(\mathrm{Alg}_\Pi, \phi)
&\leq  \tilde O(\sigma \sqrt{|\Sc||\Ac|T}) + \epsilon T,
\end{align*}
where $T = NL$ is the total number of timesteps.
In other words, we can translate the error from the function approximation directly into a regret bound when solving the saddle-point problem \eqref{e-saddle}, and richer function classes will yield better bounds.

On the other hand, consider the case where our approximation architecture is flexible enough to represent \emph{any} policy, but our algorithm for choosing the policy employs an approximation procedure, such as online policy gradient. In that case the KL-divergence from the current policy to the optimistic policy is not zero, but if the policy is converging towards the optimal policy at some rate, then we may be able bound the sum of the KL divergences. There has been much recent work examining the convergence rate of policy gradient and entropy regularized policy gradient (under somewhat restrictive assumptions on the initial state distribution $\rho$) \cite{agarwal2021theory, zhang2021sample, bhandari2019global, bhandari2021linear, mei2020global}. We leave to future work combining the results in this paper with results from the literature for the derivation of regret bounds in that case.

\section{Epistemic-Risk-Seeking Actor-Critic}
We have a derived a two-player zero-sum game, the solution of which yields a policy that explores efficiently by minimizing a bound on Bayesian regret. There are many possible approaches one could use to solve the saddle point problem, even in the purely online RL setting. In this section we describe a very simple approach that works reasonably well in practice, though it is likely that more sophisticated variants of policy algorithms would perform better \cite{schulman2017proximal, map2018, schulman2015trust,kakade2001natural}. Our approach is to derive gradients for both the policy parameters and the risk-seeking parameter, then to update them online simultaneously using stochastic gradients. %
If we parameterize the policy $\pi$ by some $\theta \in \Theta$, then the gradient of the saddle-point problem~\eqref{e-saddle} with respect to $\theta$ is given by
\begin{align}
\label{e-pol-grad}
\Expect_{s\sim\rho} \nabla_\theta J_{1,\tau}^\pi(s) =& 
\sum_{l=1}^L \Expect_{\pi} \Big(\nabla_\theta \log \pi(s_l, a_l) K_{l, \tau}^\pi(s_l,a_l) +\\&\quad  \tau \nabla_\theta H(\pi(s_l, \cdot))\Big).
\end{align}
This is a straightforward extension of the classic policy gradient theorem adapted to our case \cite{sutton1999policy}.
Similarly, the gradient with respect to $\tau$ is given by
\begin{align*}
    \Expect_{s\sim\rho} \nabla_\tau J_{1,\tau}^\pi(s)
    = \sum_{l=1}^L \Expect_{\pi} \left( H(\pi(s_l, \cdot) - \frac{\sigma^2(s_l,a_l)}{2 \tau^2}\right).
\end{align*}
Finally, we have a relationship between the gradients and the Bayesian regret bound decomposition.
\begin{restatable}{cor}{gradsdecomp}
The gradients of the saddle-point correspond to the gradients of the components in the Bayesian regret decomposition in Lemma \ref{l-decomp}, \ie,
\begin{align*}
    &\Expect_{s\sim\rho} (-\nabla_\theta J_{1,\tau}^\pi(s), \nabla_\tau J_{1,\tau}^\pi(s)) =\\ &\quad \quad (\nabla_\theta \Dist(\pi, \tau), \nabla_\tau \Error(\pi, \tau)).
\end{align*}
\end{restatable}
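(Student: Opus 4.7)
The plan is to unpack the definitions of $\Dist$ and $\Error$ and then observe that each consists of a term depending on both $(\theta,\tau)$ plus a term depending on only one of them, so differentiating against the other variable kills the extra term. I expect essentially no obstacle here: the corollary is really a bookkeeping remark, and its value is conceptual (the two ascent/descent players minimize the two regret components separately) rather than technical.

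First, I would invoke Lemma~\ref{l-kl} to rewrite
\[
\Dist(\pi,\tau) = \Expect_{s\sim\rho}\bigl(J_{1,\tau}^\star(s) - J_{1,\tau}^\pi(s)\bigr).
\]
The term $J_{1,\tau}^\star(s)$ is defined by a maximization over all of $\Delta(\Ac)^{|\Sc|}$ and so depends only on the posterior $\phi$ and on $\tau$, not on the policy parameters $\theta$. Taking $\nabla_\theta$ of both sides therefore yields
\[
\nabla_\theta \Dist(\pi,\tau) = -\Expect_{s\sim\rho}\nabla_\theta J_{1,\tau}^\pi(s),
\]
which is the first coordinate of the asserted identity.

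Second, I would write out $\Error(\pi,\tau) = \Expect_{s\sim\rho}\bigl(J_{1,\tau}^\pi(s) - \Expect_\phi V_1^\pi(s)\bigr)$ and observe that the subtracted term $\Expect_{s\sim\rho}\Expect_\phi V_1^\pi(s)$ is the risk-neutral expected value of the policy under the current posterior: it depends on $\pi$ (and hence on $\theta$) but is entirely independent of the temperature parameter $\tau$. Differentiating with respect to $\tau$ therefore kills this term and gives
\[
\nabla_\tau \Error(\pi,\tau) = \Expect_{s\sim\rho}\nabla_\tau J_{1,\tau}^\pi(s),
\]
which is the second coordinate.

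Combining the two displays yields the claimed equality. The only minor care needed is to justify exchanging the gradient with the expectations over $s\sim\rho$ and over the (finitely supported) trajectory distribution that underlies $J_{1,\tau}^\pi$; under the standard smoothness of the softmax parameterization of $\pi_\theta$ and the boundedness of the finite MDP this is immediate by dominated convergence, and requires no new assumption beyond what is already used to state the policy gradient in Eq.~\eqref{e-pol-grad}.
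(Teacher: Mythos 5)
Your proposal is correct and matches the paper's intended argument exactly: the paper states this corollary without a separate proof, treating it as immediate from Lemma~\ref{l-kl} and the definitions, which is precisely the bookkeeping you perform (rewriting $\Dist(\pi,\tau)=\Expect_{s\sim\rho}(J_{1,\tau}^\star(s)-J_{1,\tau}^\pi(s))$ and noting that $J_{1,\tau}^\star$ is $\theta$-independent while $\Expect_\phi V_1^\pi$ is $\tau$-independent). Your closing remark on exchanging gradients with expectations is harmless and standard in this finite-MDP, smooth-parameterization setting, so there is nothing to add.
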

Fixing $\tau$ and taking a step in the negative gradient with respect to $\theta$ is towards minimizing the KL distance to the optimal optimistic policy, and for fixed $\theta$ taking a step in the direction of the gradient with respect to $\tau$ is towards minimizing the amount of optimism in the policy. Seen this way, the gradient flow is in the direction of minimizing the components in the Bayesian regret bound decomposition from Lemma \ref{l-decomp}. %

Importantly, both of these gradient terms can be interpreted as expectations under the state-action distribution induced by the policy $\pi$.
This suggests a scheme where we sample states and actions from the distribution generated by the policy, and use the same samples to update both quantities. We call this approach \emph{epistemic-risk-seeking actor-critic} (ERSAC), and it is implemented as Algorithm~\ref{a-kac} (presented in the appendix). Since this algorithm is applying stochastic gradient ascent-descent, rather than solving the saddle-point problem \eqref{e-saddle} exactly, we have no known Bayesian regret guarantees. However, as we shall demonstrate empirically, this algorithm tends to perform significantly better than vanilla actor-critic in hard exploration problems.

\paragraph{Estimating the uncertainty $\sigma$.}
In Algorithm \ref{a-kac} we left the process of deriving the estimator of $K^\pi$ open. An estimator that performed well in practice is to use online TD$-\lambda$ with $\lambda = 0.8$ and a rollout length of $N=50$ \cite{sutton:book}.%
We have also left the source of the uncertainty signal $\sigma(s,a)$ undefined. There is much work in the deep RL literature that could be plugged into the algorithm here as discussed in \S\ref{s-intro}. For our experiments we augmented the neural network with an ensemble of reward prediction heads with randomized prior functions \cite{osband2018randomized}, and used the variance of the ensemble predictions as the uncertainty signal.

\paragraph{Comparison to other actor-critic methods.}
Algorithm \ref{a-kac} is a relatively small modification of a vanilla actor critic, the modifications are in blue. They are primarily the addition of the uncertainty terms, learning the $\tau$ risk-seeking parameter, and the addition of entropy regularization weighted with the learned $\tau$. In our experiments we shall refer to the algorithm without these modifications as \emph{vanilla actor-critic}. The presence of the reward predictors in Algorithm \ref{a-kac} can act as an auxiliary task and potentially improve the representation learned by the neural network thereby improving performance. This would give our agent an advantage over vanilla actor-critic that has nothing to do with exploration. To counter that, we also give the vanilla actor-critic agent the same reward prediction task, but we do not use the uncertainty estimates they generate.

A common pattern in optimistic deep RL algorithms is to simply add an optimism bonus to the rewards based on the standard deviation of the uncertainty, \ie, replace the reward with $r^+(s,a) = \bar r(s,a) + \mu \sigma(s,a)$ for some hyper-parameter $\mu > 0$, and then run a vanilla actor-critic algorithm using this reward. In our experiments we shall refer to this variant as \emph{simple optimism actor-critic}, where the uncertainty signal is the same ensemble approach as used by Algorithm \ref{a-kac} and all results are presented after tuning the $\mu$ hyper-parameter.

\section{DeepSea Numerical Results}
\begin{figure*}[!ht]
\begin{center}
\includegraphics[scale=0.23]{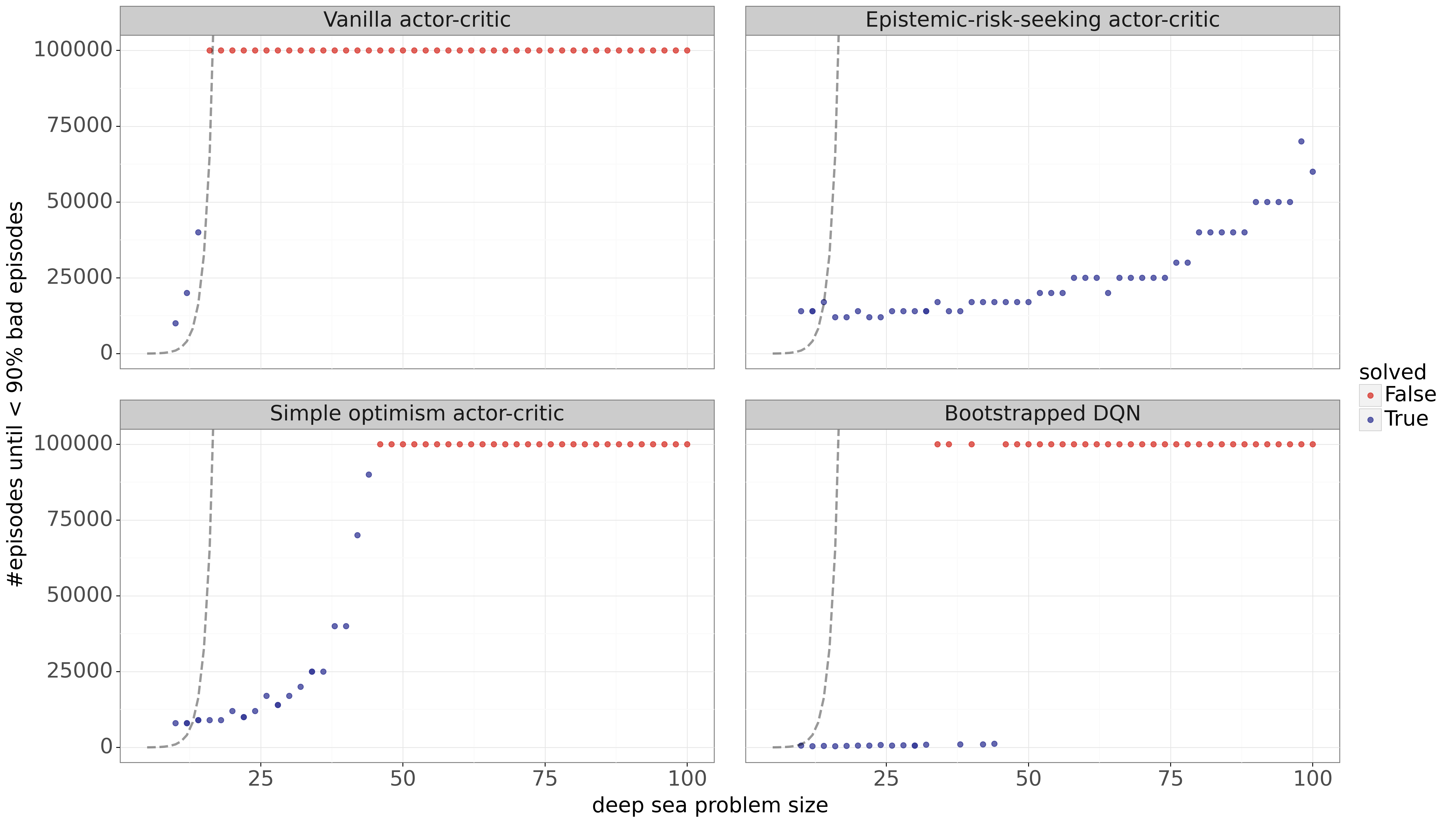}
  \caption{ERSAC is able to solve far deeper DeepSea instances than Bootstrapped DQN, despite requiring significantly less compute. Adding simple optimism to actor-critic provides some benefit, but it struggles to solver deeper instances. Vanilla actor-critic requires exponential experience to solve DeepSeas of increasing depth.\vspace{-6mm}}
  \label{f-DeepSea_solved_v_boot_dqn}
\end{center}
\end{figure*}

\begin{figure}[!h]
\begin{center}
\includegraphics[scale=0.25]{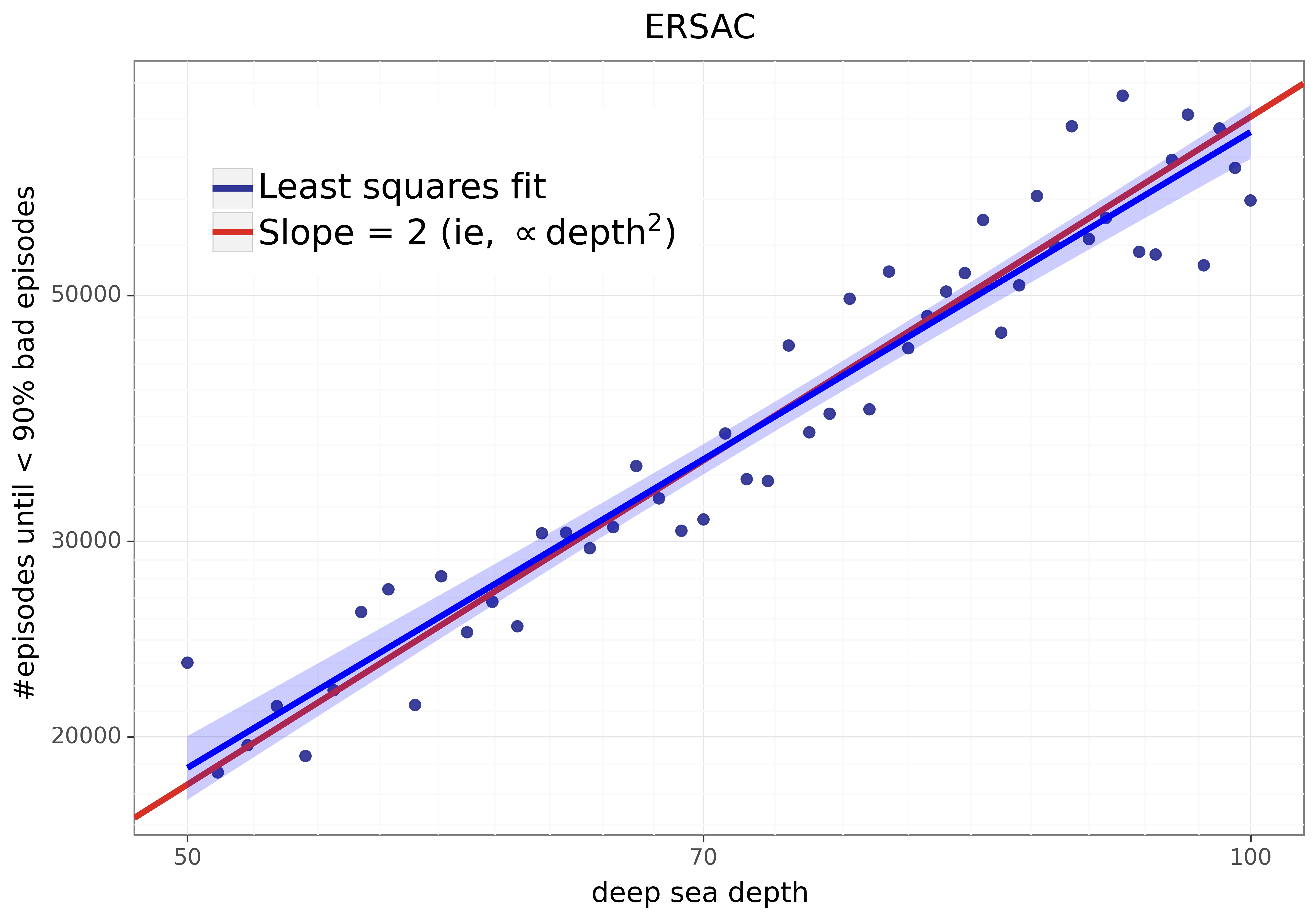}
  \caption{Algorithm~\ref{a-kac} has an empirical quadratic dependency on depth when solving DeepSea.\vspace{-4mm}}
  \label{f-solve_time_v_depth}
\end{center}
\end{figure}

In the DeepSea environment the agent finds itself at the top left of an $L \times L$ grid and must navigate it to find the reward in the bottom right corner, see Figure \ref{f-DeepSea}. At each time-step the agent descends one row and must choose to move one column left or right. This is a challenging exploration unit-test because the agent needs to select the action `move right' $L$ times in a row in order to reach the goal \cite{osband2019behaviour} (in practice the actions corresponding to right and left are different in each state to prevent an agent with a bias for taking one action repeatedly from solving the problem unfairly). An agent that is acting randomly will take time exponential in $L$ to reach the goal. However, agents that are exploring efficiently should reach the goal in time \emph{polynomial} in $L$. Although DeepSea can be made a tabular environment, in this experiment we feed a one-hot representation of the agent location into a neural network in order to test how various deep RL approaches work. We compare four approaches: Vanilla actor-critic, ERSAC (Alg.~\ref{a-kac}) with a reward predictor ensemble size of $10$, simple optimism actor-critic with the same uncertainty signal as Alg~\ref{a-kac}, and Bootstrapped DQN \cite{osband2016deep} with $10$ elements in the value ensemble and $10$ randomized priors (one per ensemble member). All agents had the same basic network architecture. Bootstrapped DQN performs an update with batch size of $128$ samples every actor step which is the default in the agent implemented in the `bsuite' \cite{osband2019behaviour}. This uses substantially more compute and wall-clock time than the other approaches, and required a GPU to run efficiently. In Figure \ref{f-DeepSea_solved_v_boot_dqn} we show the results of the four approaches. In that figure the blue dots represent solved DeepSea instances (where solved means the agent reached the goal reliably) and red dots are unsolved. The $x$-axis is depth and the $y$-axis is the number of episodes until that depth is solved. The grey dashed line is exponential in depth, which is the dependence we expect a naive agent to have. If the agent is consistently below this line, then it is exploring well. 

As we can see, the naive vanilla actor-critic algorithm suffers from an exponential dependence on depth and consequently cannot solve depths of greater than around $14$ within $10^5$ episodes. Bootstrapped DQN is much faster than Algorithm \ref{a-kac} at learning the small DeepSea instances since it uses significant amounts of replay (though we close this gap in \S \ref{s-off-policy}). However, as the DeepSea size grows it suddenly fails, unable to solve DeepSea instances larger than around size 50. The simple optimism actor-critic does provide some benefit over vanilla actor-critic, as it is able to solve DeepSeas out to approximately depth 50, however, the dependency on depth is significantly worse than ERSAC. ERSAC (Algorithm \ref{a-kac}) is able to solve DeepSea instances out to size 100 without a clear performance degradation. In Figure \ref{f-solve_time_v_depth} we show on a log-log plot that Algorithm \ref{a-kac} has an empirical \emph{quadratic} dependency on depth, a major improvement over the exponential dependency of the naive actor-critic approach.

In Appendix \ref{s-app-DeepSea} we further analyze the performance on DeepSea, and the sensitivity of Algorithm \ref{a-kac} to various hyper-parameters. We also test far deeper DeepSeas, including showing performance on a DeepSea of depth 250 where $99$ out of $100$ seeds reached the goal with $10^6$ episodes. To the best of our knowledge no other deep RL algorithm has been able to solve such hard instances of DeepSea.

\section{Incorporating Off-Policy Data}
\label{s-off-policy}
So far our discussion of Algorithm~\ref{a-kac} has been entirely about the on-policy case. In practice however, state-of-the-art deep RL agents use a substantial amount of experience replay data, which vastly improves data efficiency and overall performance \cite{mnih-dqn-2015, o2016pgq, hessel2018rainbow}. Since exploration is also about increasing data efficiency, being able to combine replay and principled exploration would yield a double improvement. In this section we extend Algorithm~\ref{a-kac} to use off-policy replay data and show that combining the risk-seeking objective and replay can provide large performance improvements. To do that we make the following updates to the core algorithm:
\vspace{-2ex}
\begin{itemize}
\itemsep0em 
    \item Add state-action-reward-noise $(s_t, a_t, r_t, \zeta_t)$, $t=1,2,\ldots,$ transition data to a replay buffer, where $\zeta_t \sim \mathcal{N}(0, \rho I_K)$ is independent noise with variance $\rho \geq 0$, and $K$ is the size of the ensemble.
    \item Mix on-policy data with off-policy data sampled from the replay buffer according to a prioritization scheme \cite{schaul2015prioritized}.
    \item Apply V-trace clipped importance sampling corrections to the off-policy trajectories \cite{espeholt2018impala}.
    \item Use the reward + noise as targets for the reward prediction ensemble \cite{dwaracherla2022ensembles}.
\end{itemize}
\vspace{-2ex}
The above setup adds a small amount of Gaussian noise to the targets for the reward ensemble. This is necessary to prevent collapsing the uncertainty estimates from the use of replay data. It is important that the noise terms be added to the replay since this ensures that the epistemic uncertainty decays with the number of real data, rather than the number of replay steps. Using randomly initialized reward heads, randomized prior functions, and adding noise to the replay buffer (a form of Bayesian bootstrapping) follows the recipe analyzed in \citet{dwaracherla2022ensembles} for good uncertainty estimates using ensembles. The V-trace clipped importance sampling re-weights the data coming from off-policy data according to how likely it is under the \emph{current} policy, so that the gradient update in \eqref{e-pol-grad} is still (approximately) under the correct measure when using replay \cite{munos2016safe, espeholt2018impala}. We shall refer to Algorithm~\ref{a-kac} when we add the changes above as `ERSAC + replay'.

\begin{figure}[!h]
\begin{center}
\includegraphics[scale=0.24]{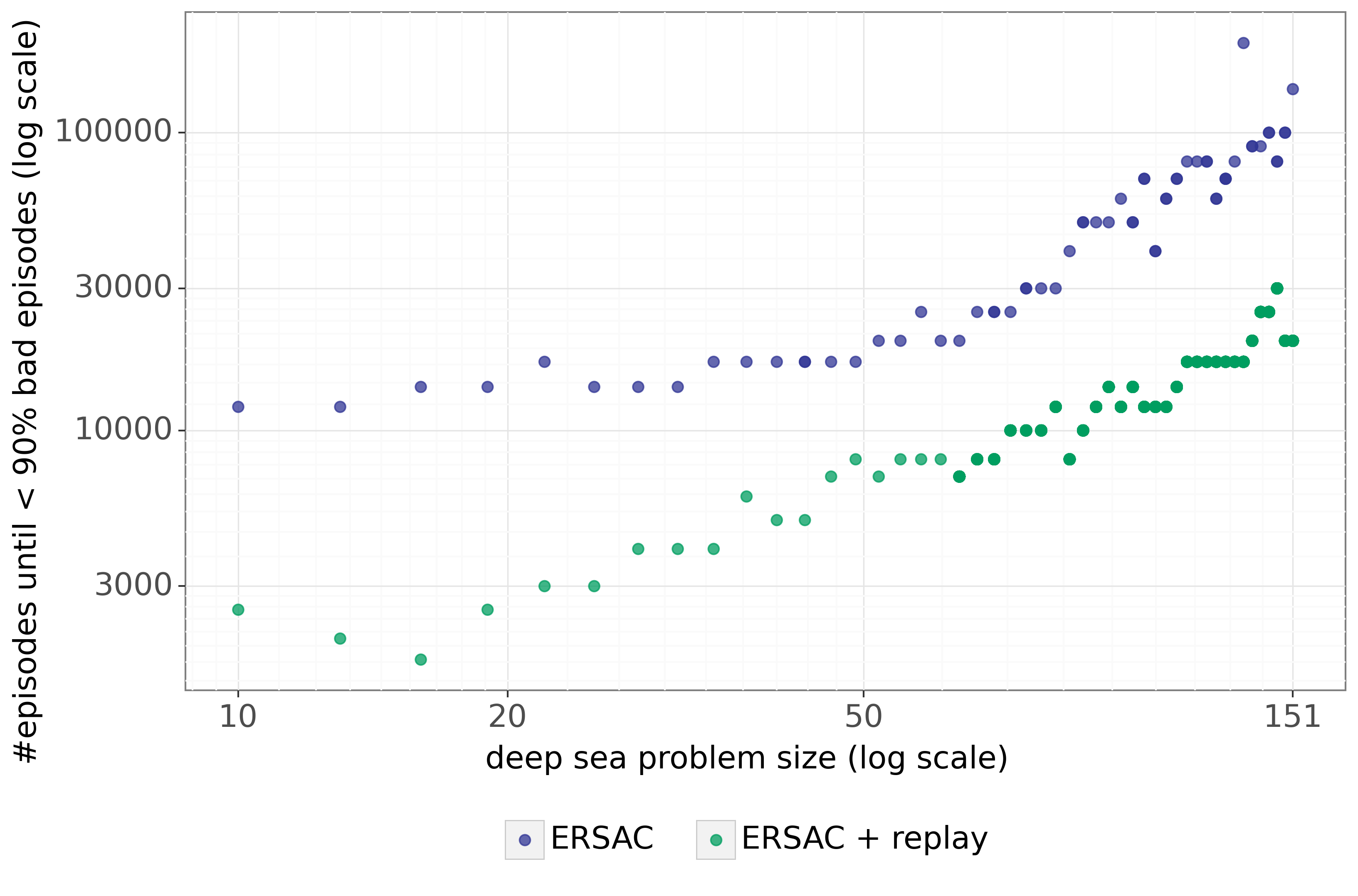}
  \caption{Adding replay to ERSAC improves data efficiency by a factor of about $4\times$ on DeepSea. Note the depth here goes to $151$.\vspace{-4mm}}
  \label{f-replay_v_no_replay}
\end{center}
\end{figure}

In Fig.~\ref{f-replay_v_no_replay} we compare the performance of ERSAC on DeepSea both with and without replay data. It is clear that adding replay data substantially improves performance, while maintaining the empirical quadratic dependence of solve time on depth (see Fig.~\ref{f-solve_time_v_depth_kimpala}). Overall, the ERSAC + replay agent yields about a $4\times$ data efficiency improvement over the pure on-policy version. The off-policy agent here used a batch size of $16$ with an offline-data fraction of $0.97$ per batch. Replay was prioritized by TD-error and when sampling the replay prioritization exponent was $1.0$ \cite{schaul2015prioritized}. The replay noise parameter was $\rho=0.1$. All other settings were identical to the on-policy variant.  In order to show the advantage of using noise in the replay buffer, we show results with and without noise on DeepSea in Figure \ref{f-replay_noise_v_no_noise}.

There are two main ways in which replay may improve performance on DeepSea. First, it may reach the goal faster. Second, once the goal is reached it may `latch on' faster, that is it may return to the goal consistently in fewer episodes. In Fig.~\ref{f-depth_100_kimpala_100_seeds} we show the reward of the agents on a depth $100$ instance of DeepSea, averaged over $100$ random seeds. It is clear that using replay is \emph{both} finding the goal and latching on faster. However, we note that the on-policy version of the algorithm reached the goal 99 times out of 100, whereas the off-policy version reached the goal only 93 times. This suggest that the replay version may not be quite as robust as the on-policy version, at least for the hyper-parameters we used.

\begin{figure}[!h]
\begin{center}
\includegraphics[scale=0.35]{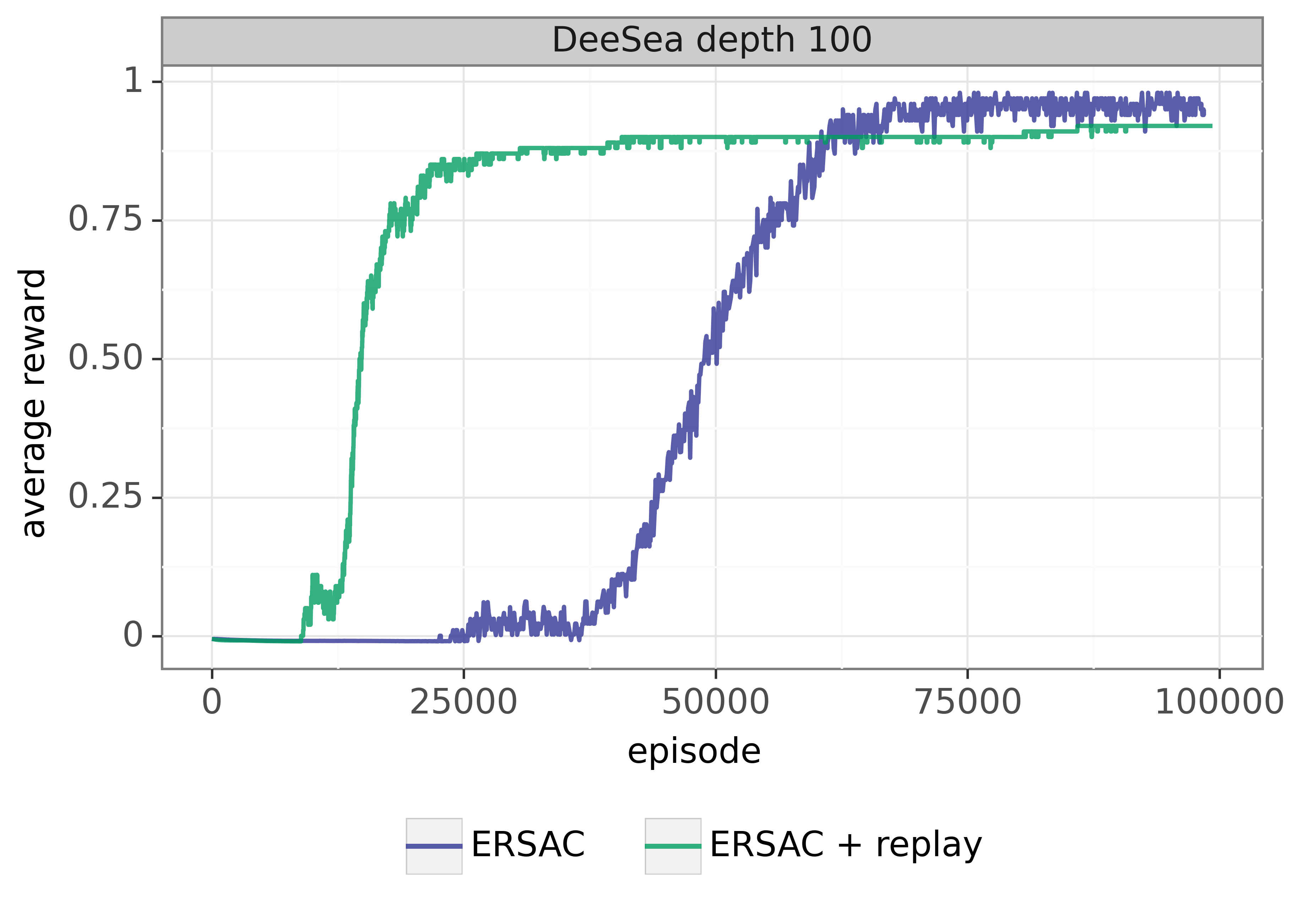}
  \caption{Adding replay to ERSAC improves both the time until goal first reached and the latching on speed in DeepSea.\vspace{-5mm}}
  \label{f-depth_100_kimpala_100_seeds}
\end{center}
\end{figure}

\section{Atari Numerical Results}
Finally, we compare ERSAC + replay to an Actor-critic + replay agent on the Atari benchmark \cite{bellemare-ale}. Our setup involves actors generating experience and sending them to a learner, which mixes the online data and offline data from a replay buffer to update the network weights \cite{hessel2021podracer, mnih2016asynchronous}. Our agent is relatively simple compared to modern state-of-the-art Atari agents since it is missing components like model-based rollouts, distributional heads, auxiliary tasks, \etc~ The point of these experiments is not to produce state-of-the-art results, but to provide evidence of a clear benefit when the addition of the risk-seeking objective function is incorporated into a policy-gradient based agent. We ran both agents on the full Atari suite and averaged the results over five seeds. Between the agents all hyper-parameters in common were set to the same values, and tuned for the replay actor-critic agent performance. The replay actor-critic agent used a fixed entropy regularization of $0.02$. The per-game results for all 57 games are presented in Figure \ref{f-atari_all_games}, and Figure~\ref{f-atari_human_normalized_median} shows the median human-normalized performance across the entire suite (calculated in the same way as \citet{hessel2018rainbow}). Clearly the addition of the risk-seeking objective in Algorithm \ref{a-kac} is providing a significant benefit over the actor-critic agent. The ERSAC agent reaches the peak performance of the actor-critic agent in about $1.8\times$ fewer environment frames, for essentially the same computational cost. The advantage comes from the fact that the risk-seeking objective leads to deep exploration, which results in finding higher rewarding states and in better cumulative performance.

\begin{figure}[!h]
\begin{center}
\includegraphics[scale=0.34]{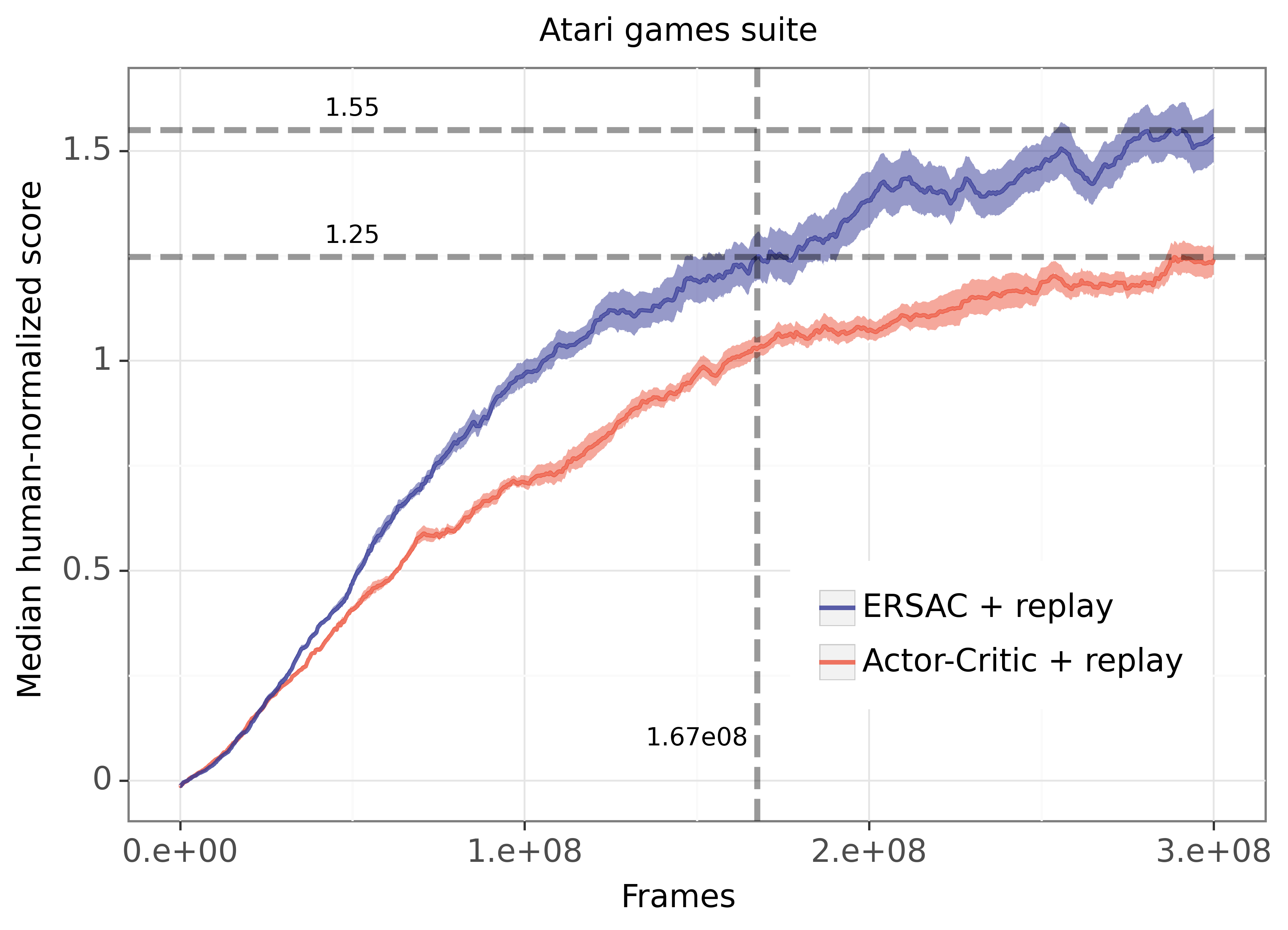}
  \caption{ERSAC reaches the same median performance on the Atari suite as the actor-critic baseline in about $1.8\times$ fewer frames.\vspace{-5mm}}
  \label{f-atari_human_normalized_median}
\end{center}
\end{figure}

\section{Conclusion}
We presented a new policy-gradient algorithm for efficient exploration. It was derived by endowing the agent with an epistemic-risk-seeking utility function, where the amount of risk-seeking is controlled by a risk-seeking parameter. The formulation entails solving a zero-sum game between the policy and the risk-seeking parameter. The policy is updated to maximize the optimistic reward, and the risk-seeking parameter is tuned to minimize regret. This procedure is a small modification to vanilla actor-critic but produces vastly improved results on challenging exploration problems. %

\newpage

\bibliographystyle{icml2023}
\bibliography{refs}

\newpage
\appendix
\onecolumn

\section{Main algorithm}

\begin{algorithm}
\caption{Epistemic-risk-seeking actor-critic (ERSAC)}
\begin{algorithmic}[1]
\label{a-kac}
  \STATE Input initial parameters $\theta^0 \in \Theta$, {\color{blue} $\tau^0 > 0$, uncertainty estimator
  $\sigma: \Sc \times \Ac \rightarrow \reals_+$}
  \STATE Input policy function $\pi_\theta: \Sc \rightarrow \Delta(\Ac)$ and value function $J_\theta: \Sc \rightarrow \reals$
  \STATE For $k=0, 1, \ldots$
  \STATE \quad Gather trajectory $\nu = (r_1, s_1, a_1,\ldots, r_N, s_N)$ using $\pi^k$
  \STATE \quad {\color{blue} Compute uncertainties $\sigma(s_i, a_i)$, $i=1, \ldots N$}
  \STATE \quad Estimate {\color{blue} $\hat K_{l, \tau}^\pi(s_i, a_i)$} using $r_i, J_{\theta^k}(s_i), {\color{blue} \sigma(s_i, a_i), \tau^k}$, $i=1, \ldots, N$, and Eq.~\ref{e-kpi}
  \STATE \quad $L_\mathrm{policy} =  (1/N) \sum_{i=1}^N \left(\log \pi_{\theta^k}(s_i, a_i) \verb|stop_grad|(\hat K_{l, \tau}^\pi(s_i, a_i) - J_{\theta^k}(s_i)) - \tau^k H(\pi_{\theta^k}(s_i, \cdot)) \right)$
  \STATE \quad $L_\mathrm{value} = (1/N) \sum_{i=1}^N (J_{\theta^k}(s_i) - \verb|stop_grad|(\hat K_{l, \tau}^\pi(s_i, a_i) - \tau^k \log \pi_{\theta^k}(s_i, a_i)))^2$
  \STATE \quad {\color{blue} $L_\tau = (1/N) \sum_{i=1}^N \left(\frac{\sigma^2(s_i, a_i)
  }{2 \tau} + \tau H(\pi_{\theta^k}(s_i, \cdot) \right)$}
  \STATE \quad $\theta^{k+1} = \theta^k + \eta (\nabla_\theta L_\mathrm{policy} - \nabla_\theta L_\mathrm{value}) $
  \STATE \quad {\color{blue} $\tau^{k+1} = \tau^k - \eta \nabla_\tau L_\mathrm{\tau}$} %
  \STATE \quad {\color{blue} Update uncertainty model $\sigma$ using $\nu$}
\end{algorithmic}
\end{algorithm}

\section{Proofs}
\label{s-proofs}
\ldecomp*
\begin{proof}
\begin{align*}
\epregret(\pi, \phi) &= \Expect_\phi \Expect_{s \sim \rho} (V_1^\star(s) - V_1^{\pi}(s))
\\
&\leq \Expect_{s \sim \rho} (J_{1, \tau}^{\star}(s) - \Expect_\phi V_1^{\pi}(s))
\\
&= \Expect_{s \sim \rho} (J_{1, \tau}^{\star}(s) - J_{1, \tau}^{\pi}(s) + J_{1, \tau}^{\pi}(s)- \Expect_\phi V_1^{\pi}(s))
\\
&= \Dist(\pi, \tau) +  \Error(\pi, \tau).
\end{align*}
\end{proof}

\thmmain*
\begin{proof}
We can rewrite the saddle-point formulation in two ways. For any $\pi$ we have
\begin{align*}
 \min_\tau \Expect_{s \sim \rho} J_{1, \tau} ^\pi(s) %
 &= \min_\tau \Expect_{s \sim \rho} (J_{1, \tau}^{\pi}(s) - \Expect_\phi V_1^\pi(s) + \Expect_\phi V_1^\pi(s)) \\
 &= \Expect_{s \sim \rho} \Expect_\phi V_1^\pi(s) + \min_\tau \Error(\pi, \tau),
\end{align*}
and for any $\tau$
\begin{align*}
 \max_{\pi \in \Pi} \Expect_{s \sim \rho} J_{1, \tau} ^\pi(s)%
 &= \max_{\pi \in \Pi} \Expect_{s \sim \rho} (J_{1, \tau}^{\pi}(s) - J_{1, \tau}^\star(s) + J_{1, \tau}^\star(s)) \\
 &= \Expect_{s \sim \rho} J_{1, \tau}^\star(s) - \min_{\pi \in \Pi} \Dist(\pi, \tau).
\end{align*}
From strong duality and the fact that $(\pi_*, \tau_*)$ is a primal-dual optimum we know that $\max_{\pi \in \Pi} \Expect_{s \sim \rho} J_{1, \tau_*} ^{\pi}(s) = \Expect_{s \sim \rho} J_{1, \tau_*} ^{\pi_*}(s)= \min_{\tau}\Expect_{s \sim \rho} J_{1, \tau} ^{\pi_*}(s)$, which implies
\begin{align*}
\Expect_{s \sim \rho} J_{1, \tau_*}^\star(s) - \min_{\pi \in \Pi} \Dist_{\phi}(\pi, \tau_*) = \Expect_{s \sim \rho} \Expect_\phi V_1^{\pi_*}(s) + \min_\tau \Error(\pi_*, \tau)
\end{align*}
and so
\begin{align*}
    \epregret(\pi_*, \phi) &\leq \Expect_{s \sim \rho} (J_{1, \tau_*}^\star(s) -
    \Expect_\phi V_1^{\pi_*}(s) )\\
    &= \min_{\pi \in \Pi} \Dist(\pi, \tau_*) + \min_\tau \Error(\pi_*, \tau).
\end{align*}
\end{proof}

\loptregret*
\begin{proof}
At episode $t$ denote the uncertainty at state-action $(s,a)$ as $\sigma^2 / n^t(s,a)$, where $n^t(s,a)$ is the visitation count of $(s,a)$ before episode $t$. Under the assumption of independent priors across layers we can write
\[
\Expect_{s \sim \rho} \Expect_{\phi^t} V_1^{\pi^t}(s) = \sum_{l=1}^L \Expect_{\pi^t} \bar r_l^t(s_l,a_l),
\]
and recall that 
\[
  \Expect_{s \sim \rho}  J_{1,\tau}^{t,\pi^t}(s) =
  \sum_{l=1}^L \Expect_{\pi^t} \left(\bar r^t_l(s_l, a_l) +  \frac{\sigma_l^2}{2 \tau n^t(s_l, a_l)} +  \tau H(\pi^t_l(s_l, \cdot))\right),
\]
and so we can write
\begin{align*}
\Error(\pi_t, \tau) &= \Expect_{\phi} \Expect_{s \sim \rho} (J_{1, \tau}^{\pi}(s) - V_1^{\pi}(s)) \\
&= \sum_{l=1}^L \Expect_\pi \left( \frac{\sigma^2}{2 \tau n^t(s_l, a_l)} +  \tau H(\pi^t_l(s, \cdot))\right).
\end{align*}
Now define scalar (up to log factors which we shall ignore for brevity)
\[
\tau_N = \tilde O(\sigma \sqrt{|\Sc| |\Ac| /  (L N)  }).
\]
Then we have
\begin{align*}
\sum_{t=1}^N \min_\tau \Error_{\phi^t}(\pi_t, \tau)
    &\leq  \sum_{t=1}^N \Error_{\phi^t}(\pi_t, \tau_N)
    \\
    &= \sum_{t=1}^N \sum_{l=1}^L \Expect_\pi \left( \frac{\sigma^2}{2 \tau_N n^t(s_l, a_l)} + \tau_N H(\pi^t_l(s, \cdot))\right)
    \\
    &\leq (1/2) \sigma^2 |\Ac|( 1 + \log N) \tau_N^{-1} \sum_{l=1}^L |\Sc_l| + \tau_N N L \log |\Ac|
    \\
    &\leq \tilde O(\sigma \sqrt{L |\Sc||\Ac| N}),
\end{align*}
where we used the fact that entropy is bounded, the pigeonhole principle Lemma 6 from \cite{o2021klearning}, and the identity $\sum_{l=1}^L |\Sc_l| = |\Sc|$. The result follows by substituting in $T = N L$.
\end{proof}

\section{DeepSea results discussion}
\label{s-app-DeepSea}

\begin{figure}[h]
\begin{center}
\includegraphics[scale=0.4]{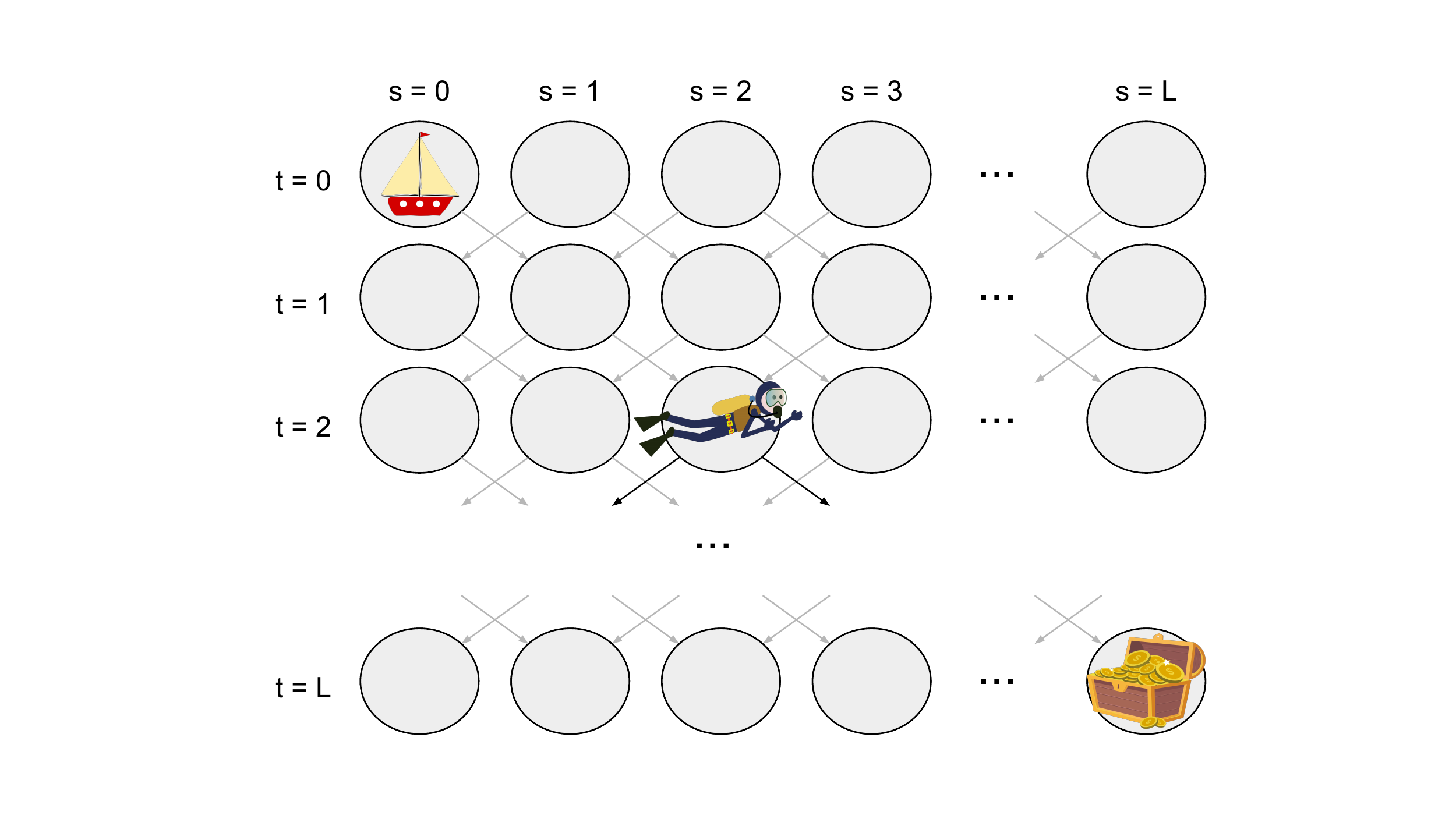}
  \caption{The DeepSea MDP is a challenging exploration `unit-test' where the agent must navigate from the top left state to the bottom right in order to collect a positive reward. Naive exploration approaches take time exponential in depth to solve this problem.
  }
  \label{f-DeepSea}
\end{center}
\end{figure}

K-learning has a worst-case $\tilde O(L \sqrt{|\Sc| |\Ac| T})$ Bayesian regret in tabular domains. In a DeepSea of depth $d$ we have $L=d$, $S=d^2$, $A=2$, so this regret bound would translate as $\tilde O(d^2 \sqrt{T})$, and to have \emph{average} regret below some threshold would require $O(d^4)$ timesteps, or $O(d^3)$ episodes. Algorithm \ref{a-kac} is an online, stochastic policy gradient based approximation to K-learning, so we have no known regret bound guarantee. However, in Figure (\ref{f-solve_time_v_depth}) we find that empirically for Algorithm \ref{a-kac} the number of episodes required to `solve' a DeepSea instance appears to have a quadratic dependency on depth, a factor of $d$ better than the worst-case bound. Naive approaches to exploration require episodes scaling as $O(2^d)$, so a quadratic dependency is a substantial improvement.

Our agent used TD-$\lambda$ with $\lambda=0.8$, Figure \ref{f-robustness_lambda} we show the performance of the agent as a function of the $\lambda$ parameter. It appears that values of $\lambda \geq 0.6$ perform well, able to solve most or all of the DeepSea instances out to depth 100. Figure \ref{f-robustness_rollout} shows the robustness of the method to TD-$\lambda$ rollout length. For very small rollouts (\eg, 1) the benefit of Algorithm \ref{a-kac} over vanilla actor-critic is minor, however the epistemic-risk-seeking agent is able to solve practically all DeepSea instances to depth 100 reliably for just a rollout of length 25.

\begin{figure}[h!]
\begin{center}
    \includegraphics[scale=0.3]{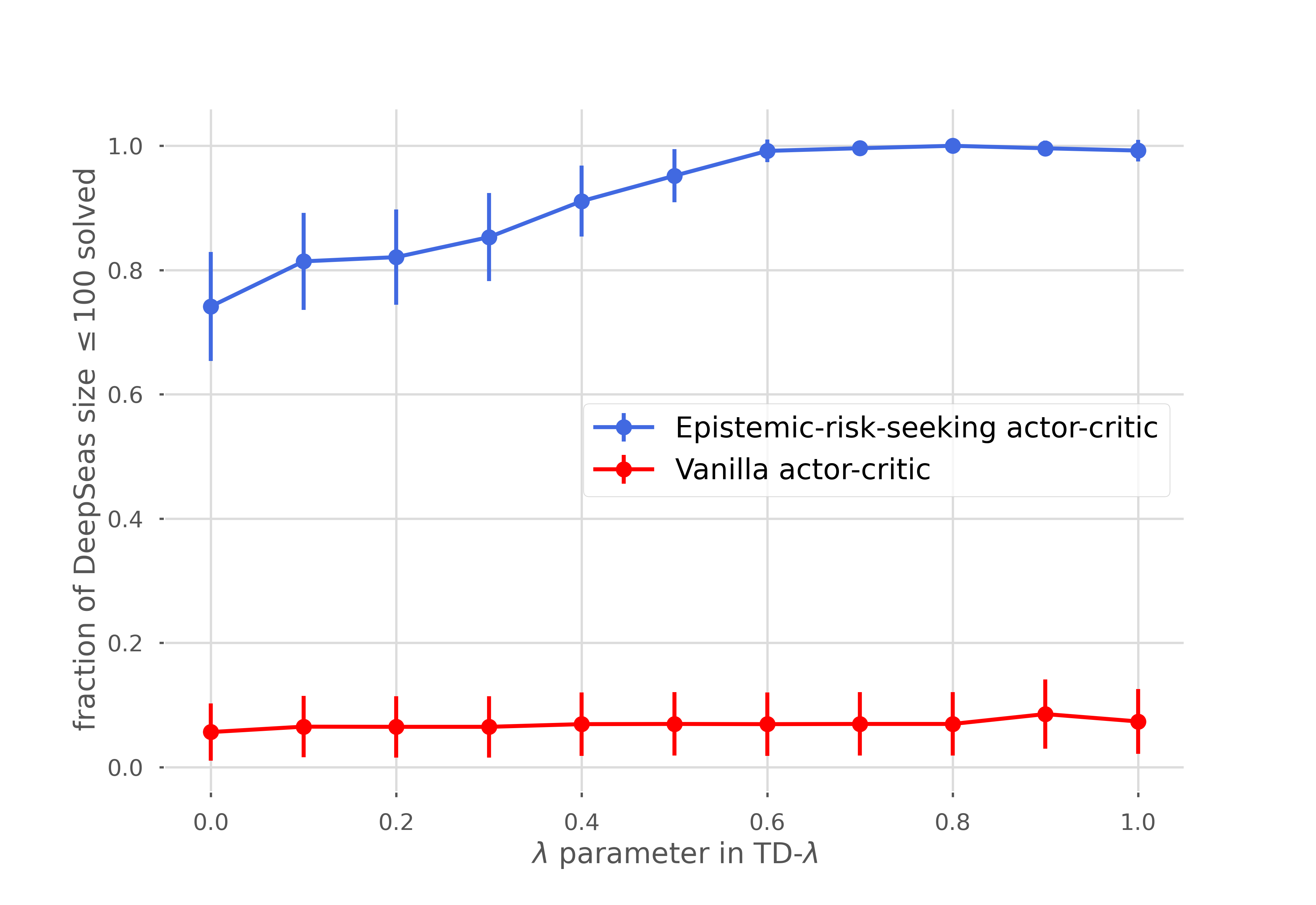}
  \caption{When using TD-$\lambda$ to estimate the K-values in Algorithm \ref{a-kac} larger $\lambda$ values tend to perform better in DeepSea.}
  \label{f-robustness_lambda}
\end{center}
\end{figure}

\begin{figure}[h!]
\begin{center}
    \includegraphics[scale=0.3]{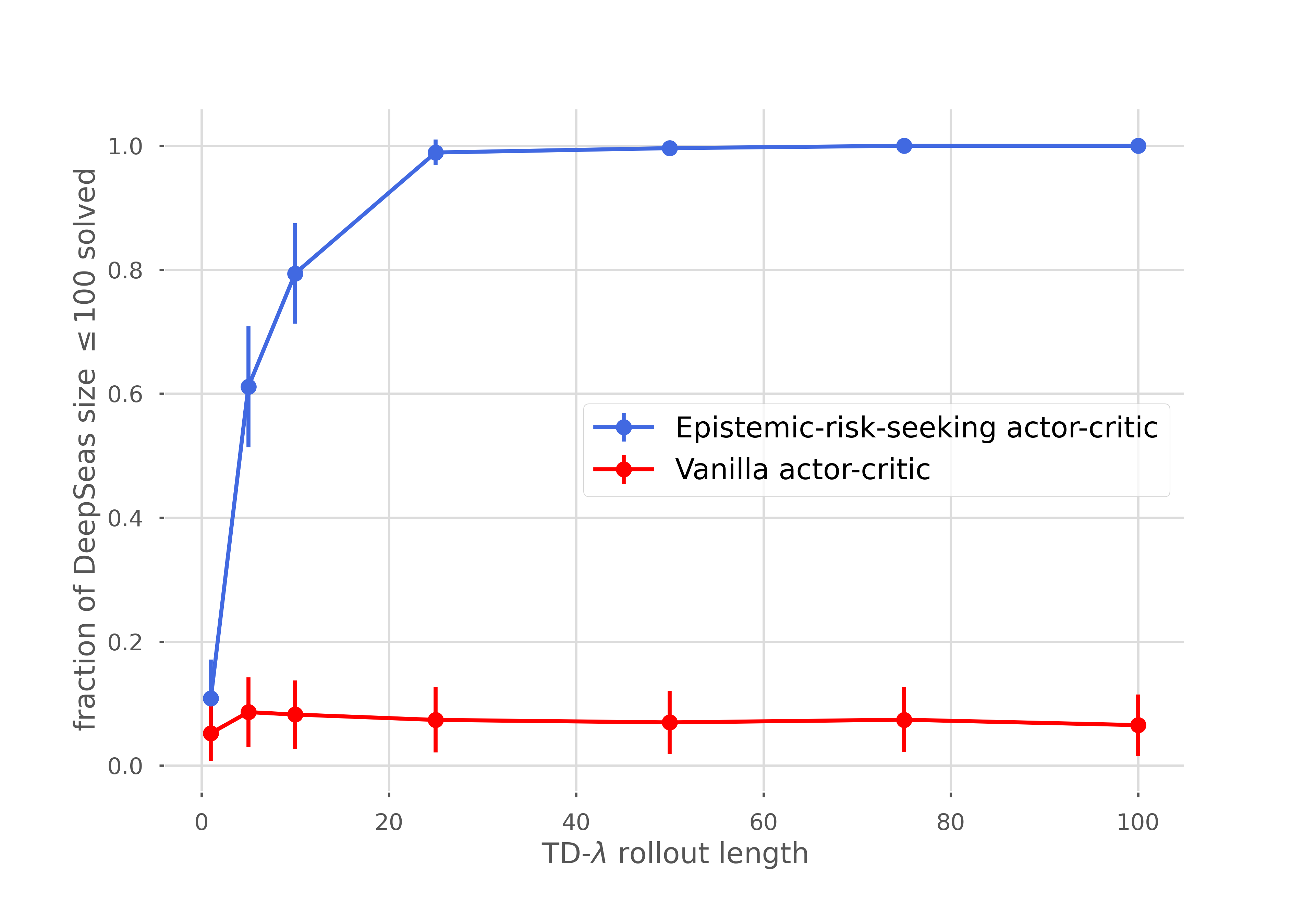}
  \caption{When using TD-$\lambda$ to estimate the K-values in Algorithm \ref{a-kac} even relatively small rollout lengths are able to solve deeper DeepSea instances. Even a rollout length of $5$ is able to solve more than $60\%$ of DeepSea instances, and $25$ is enough to solve practically all of them.}
  \label{f-robustness_rollout}
\end{center}
\end{figure}

Finally, we also tested how important \emph{learning} the risk-seeking parameter $\tau$ is, as done in Algorithm \ref{a-kac}. In Figure \ref{f-robustness_tau} we compare the approach in Algorithm \ref{a-kac} to simply using a fixed $\tau$ parameter. From this Figure it appears that there is a fixed choice of $\tau$ that matches the learned approach on DeepSea performance. However, the performance of the agent is highly dependent on this parameter and even small deviations can dramatically degrade reliability. On the other hand, Algorithm \ref{a-kac}, which learns $\tau$ from data, is able to solve almost all the DeepSea instances robustly over a wide range of initial choices of $\tau$, which suggests that the update rule that minimizes the zero-sum game \eqref{e-saddle} over $\tau$ is effective at tuning the amount of risk-seeking for efficient exploration.
\begin{figure}[h]
\begin{center}
    \includegraphics[scale=0.3]{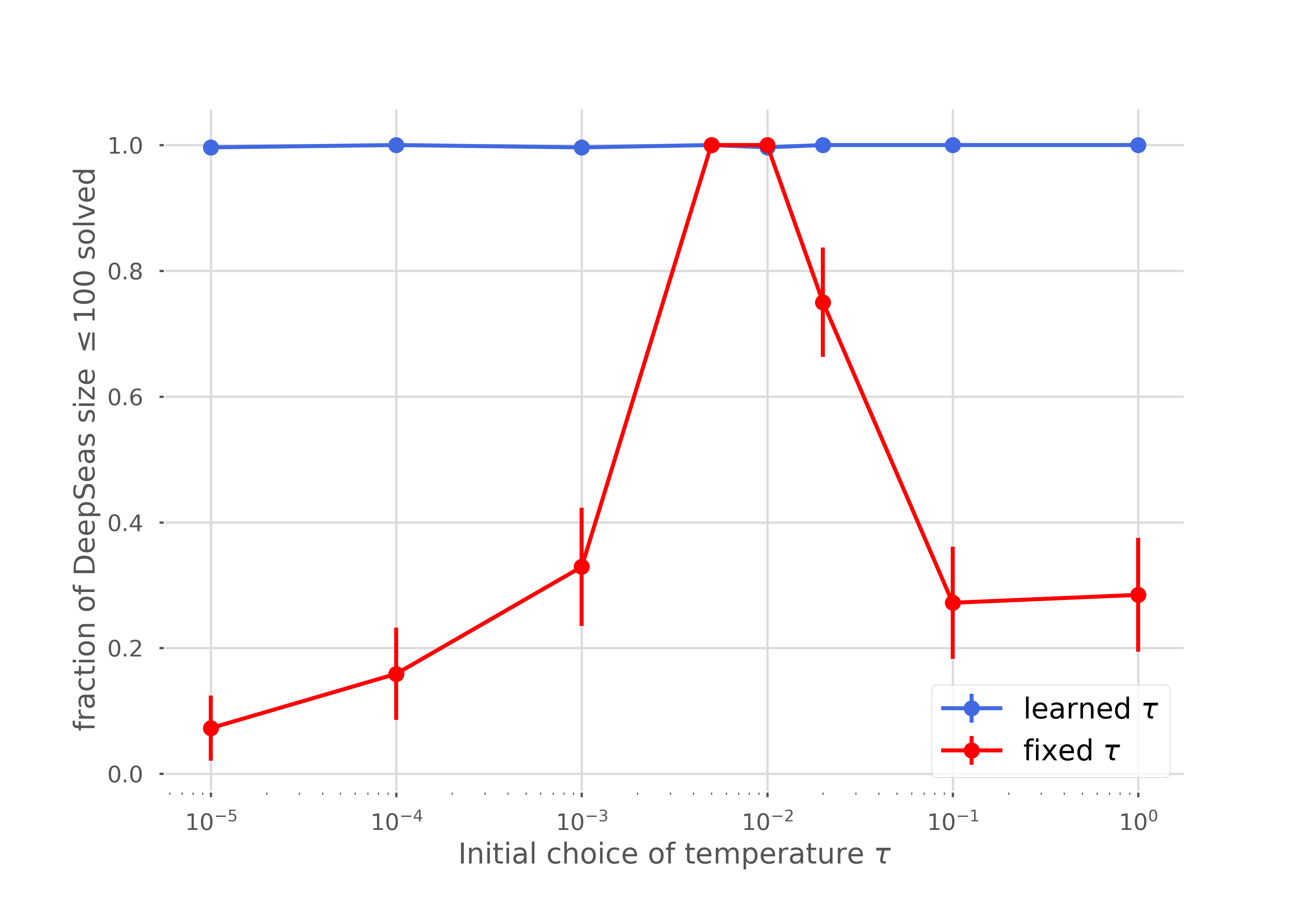}
  \caption{The optimal fixed risk-seeking parameter $\tau$ can produce good results, but learning $\tau$ via Algorithm \ref{a-kac} is far more robust.}
  \label{f-robustness_tau}
\end{center}
\end{figure}

The excellent performance of Algorithm~\ref{a-kac} on DeepSea raises the question: What is the maximum depth that the algorithm is able to consistently solve? We ran the algorithm on a DeepSea of depth 250 with $100$ random seeds to see how the performance degraded with depth. To handle the longer episode length before a reward we increased both the discount factor to $\gamma=0.999$ and the $\lambda$ factor in TD-$\lambda$ to $0.95$. The average performance is plotted in \ref{f-depth_250}. Overall, 99 out of the 100 seeds managed to reach the goal within $10^6$ episodes. In order to reach a positive reward the agent must make the exact right sequence of 250 actions and any deviation is impossible to recover from. This is a very difficult problem and one that would require an enormous number of episodes for a simple dithering agent, since $2^{250} \approx 10^{75}$. This suggests that Algorithm~\ref{a-kac} is able to handle extremely deep and difficult DeepSeas without much degradation in performance, and the limit has not yet been reached.

\begin{figure}[h]
\begin{center}
\includegraphics[scale=0.4]{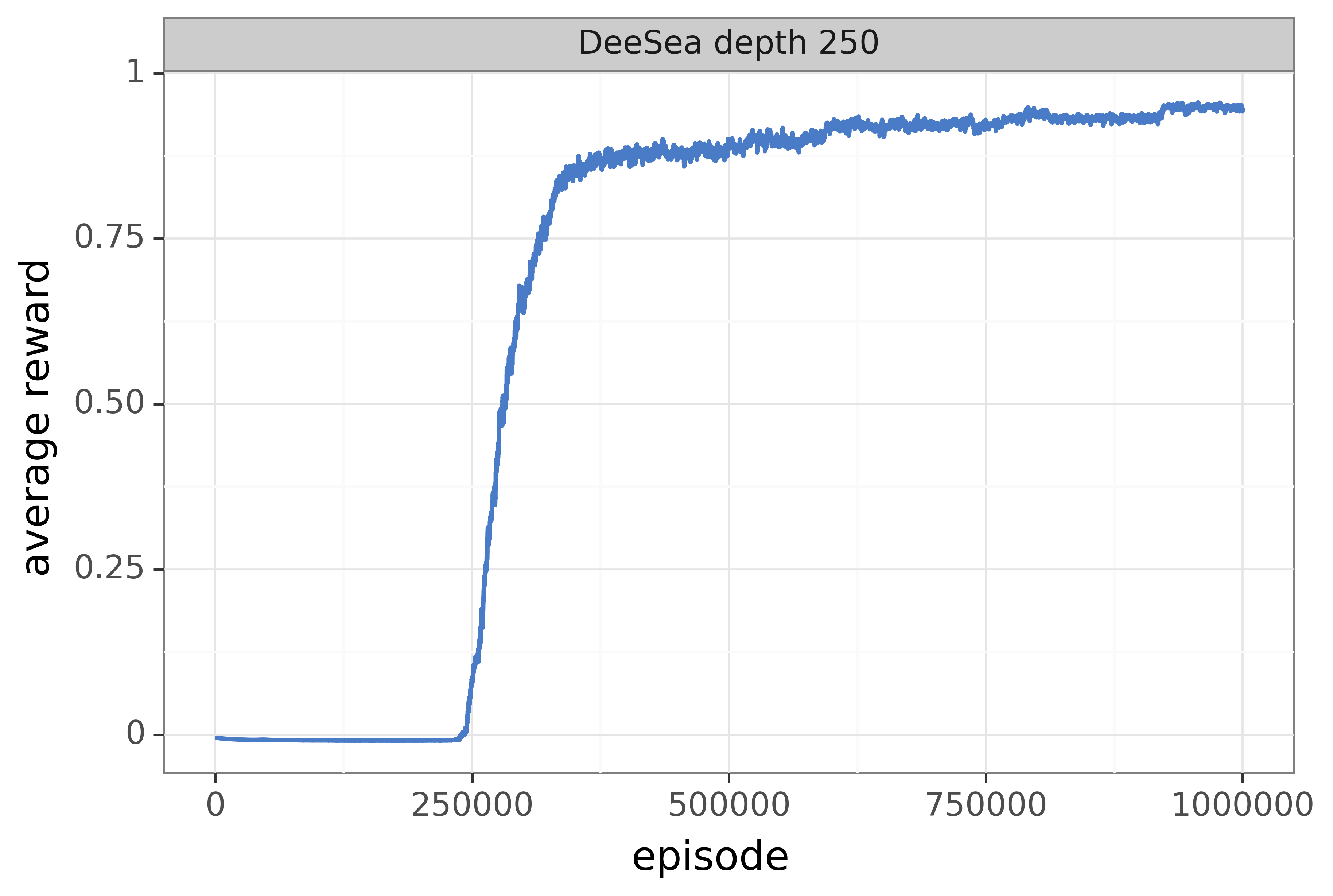}
  \caption{Performance on DeepSea of depth $250$ for Algorithm~\ref{a-kac} averaged over $100$ seeds. Overall, 99 out of 100 seeds managed to reach the goal within $10^6$ episodes.}
  \label{f-depth_250}
\end{center}
\end{figure}

\subsection{DeepSea replay experiments}

In Figure~\ref{f-replay_noise_v_no_noise} we show the benefit of adding noise to the reward samples in the replay buffer. It is clear that without the addition of noise the replay is destroying the uncertainty estimates and leading to worse performance than without replay. However, once the noise is added the agent with replay outperforms the purely online agent. Figure~\ref{f-replay_v_no_replay_not_log} is the same as Figure~\ref{f-replay_v_no_replay} except on a linear, rather than log, scale. Figure~\ref{f-solve_time_v_depth_kimpala} shows that adding replay to ERSAC does not appear to alter the quadratic dependency of solve time on depth.

\begin{figure}[!h]
\begin{center}
\includegraphics[scale=0.3]{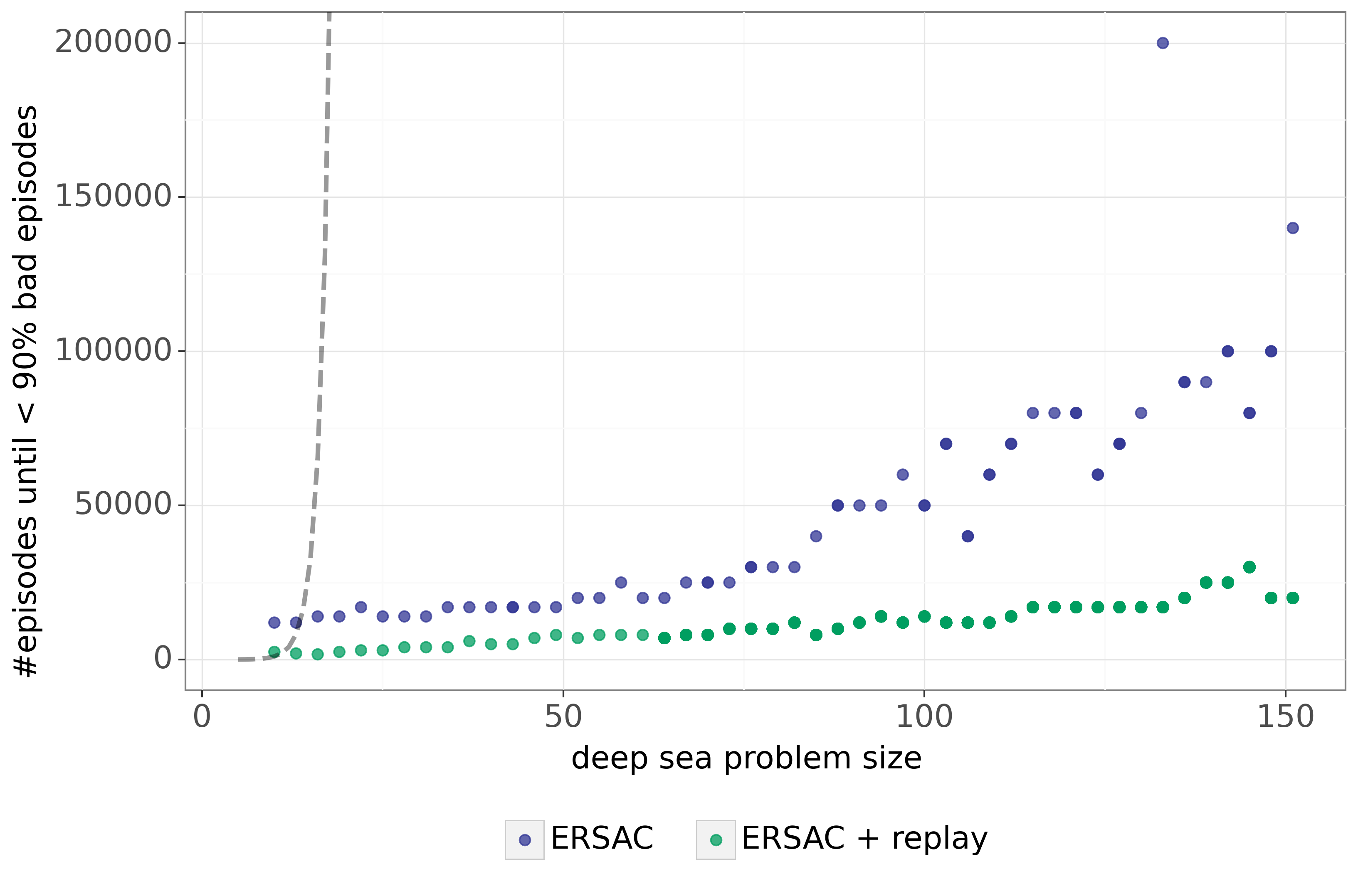}
  \caption{Adding replay to the epistemic-risk-seeking actor-critic improves data efficiency by a factor of about $4\times$. Note the depth here goes out to $151$.}
  \label{f-replay_v_no_replay_not_log}
\end{center}
\end{figure}

\begin{figure}[!h]
\begin{center}
\includegraphics[scale=0.3]{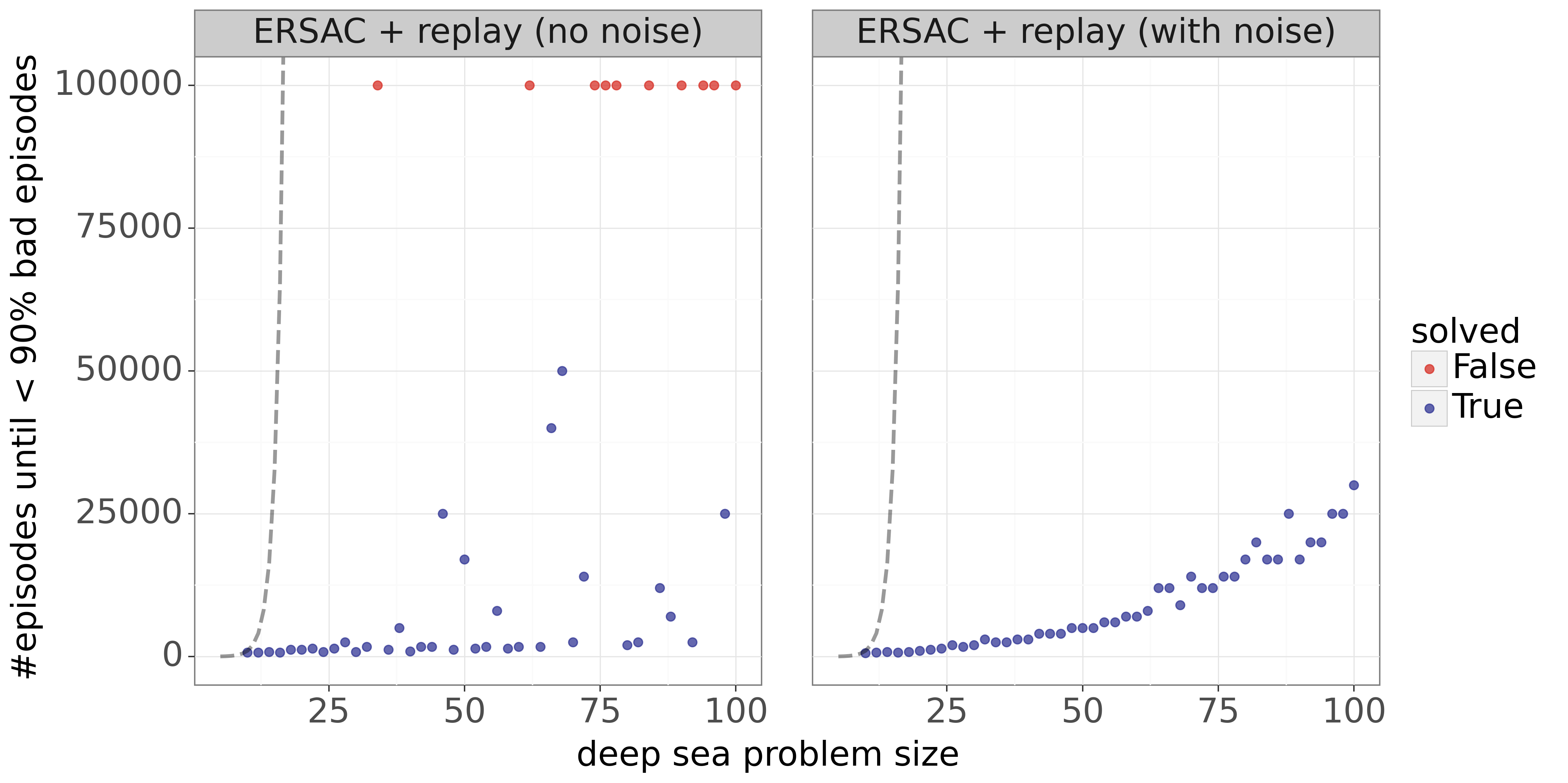}
  \caption{Adding noise to the reward targets when using replay dramatically improves the uncertainty estimates and the performance of the agent.\vspace{-4mm}}
  \label{f-replay_noise_v_no_noise}
\end{center}
\end{figure}

\begin{figure}[!h]
\begin{center}
\includegraphics[scale=0.3]{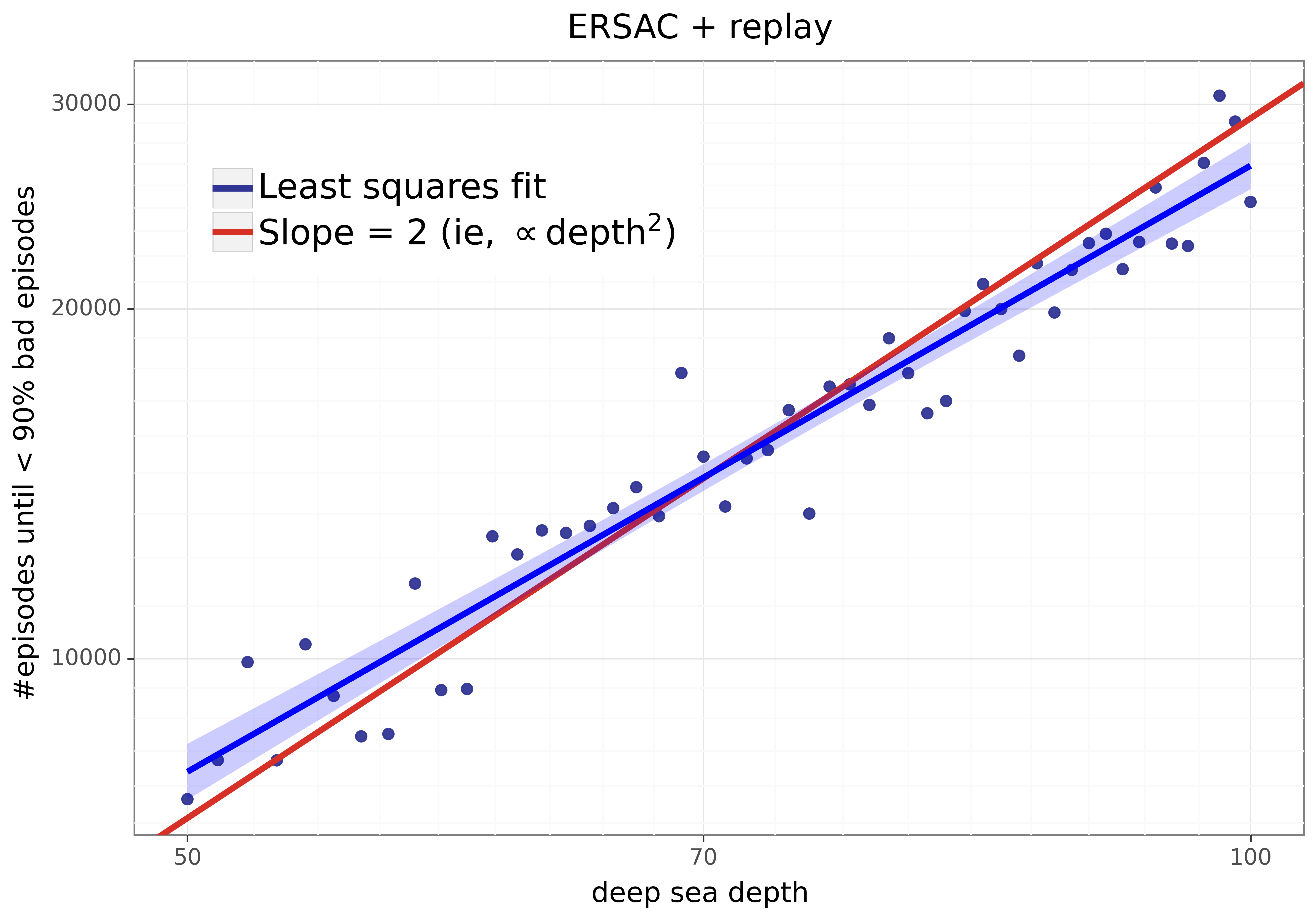}
  \caption{The solve time for ERSAC + replay on DeepSea has the same empirical quadratic dependency with depth as ERSAC without replay, but it is about $4\times$ faster overall.}
  \label{f-solve_time_v_depth_kimpala}
\end{center}
\end{figure}

\section{Atari results}
In Figure \ref{f-atari_hard_exp} we present the performance of Algorithm \ref{a-kac} compared to the vanilla actor-critic algorithm on a collection of $7$ hard exploration games from the Atari 57 suite \cite{bellemare-ale}. In Figure \ref{f-atari_all_games} we compare the performance of the agents across all 57 Atari games.

\label{app-atari}
\begin{figure*}[h]
\begin{center}
\includegraphics[scale=0.4]{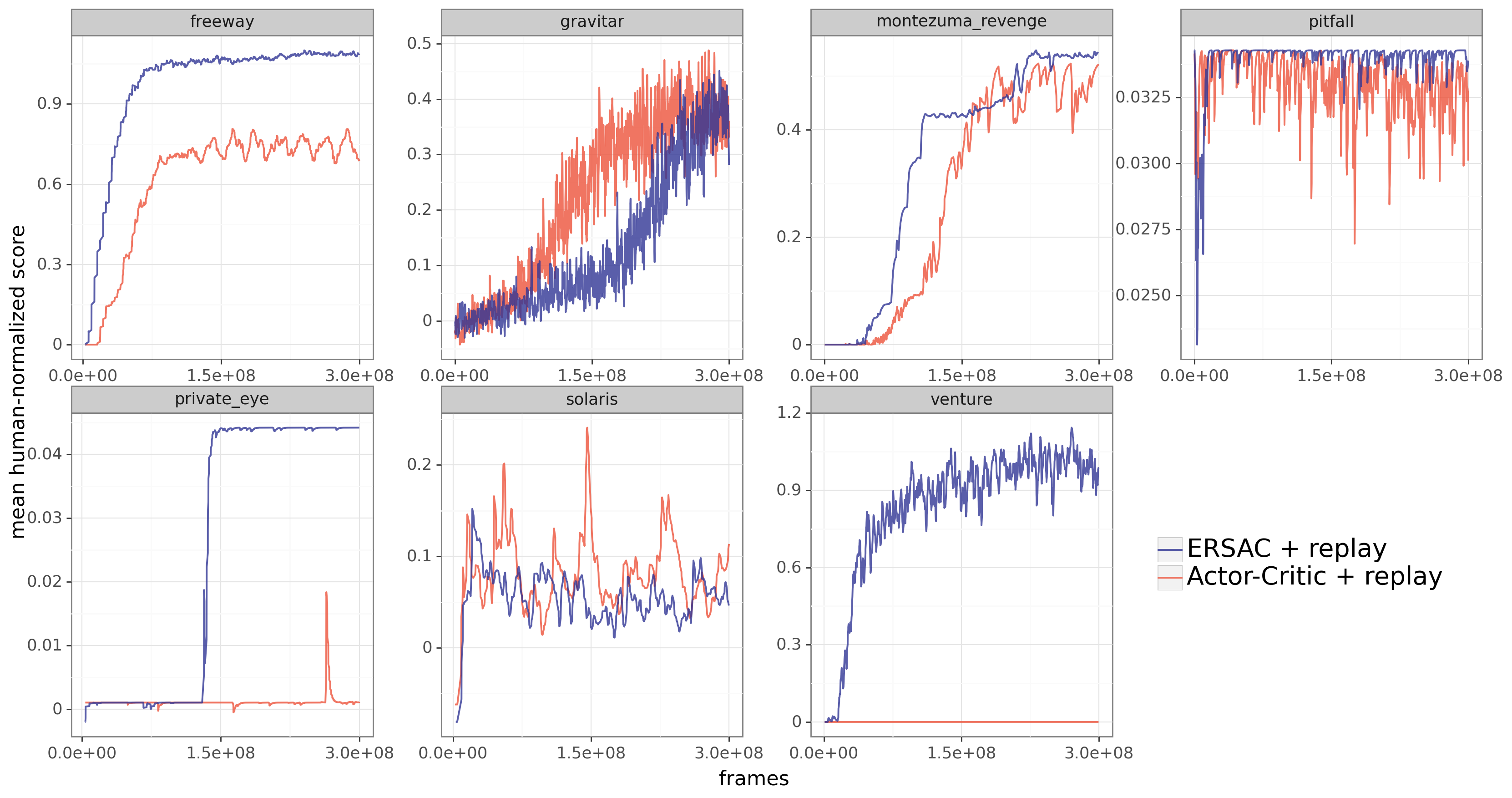}
  \caption{In this collection of hard exploration Atari games we see that the epistemic actor-critic algorithm provides a performance improvement over Replay actor-critic in four of the 7 games. In particular, there is a significant performance improvement for the very hard exploration game `Montezuma's revenge'.}
  \label{f-atari_hard_exp}
\end{center}
\end{figure*}

\begin{figure}[!hpb]
\begin{center}
\includegraphics[scale=0.27]{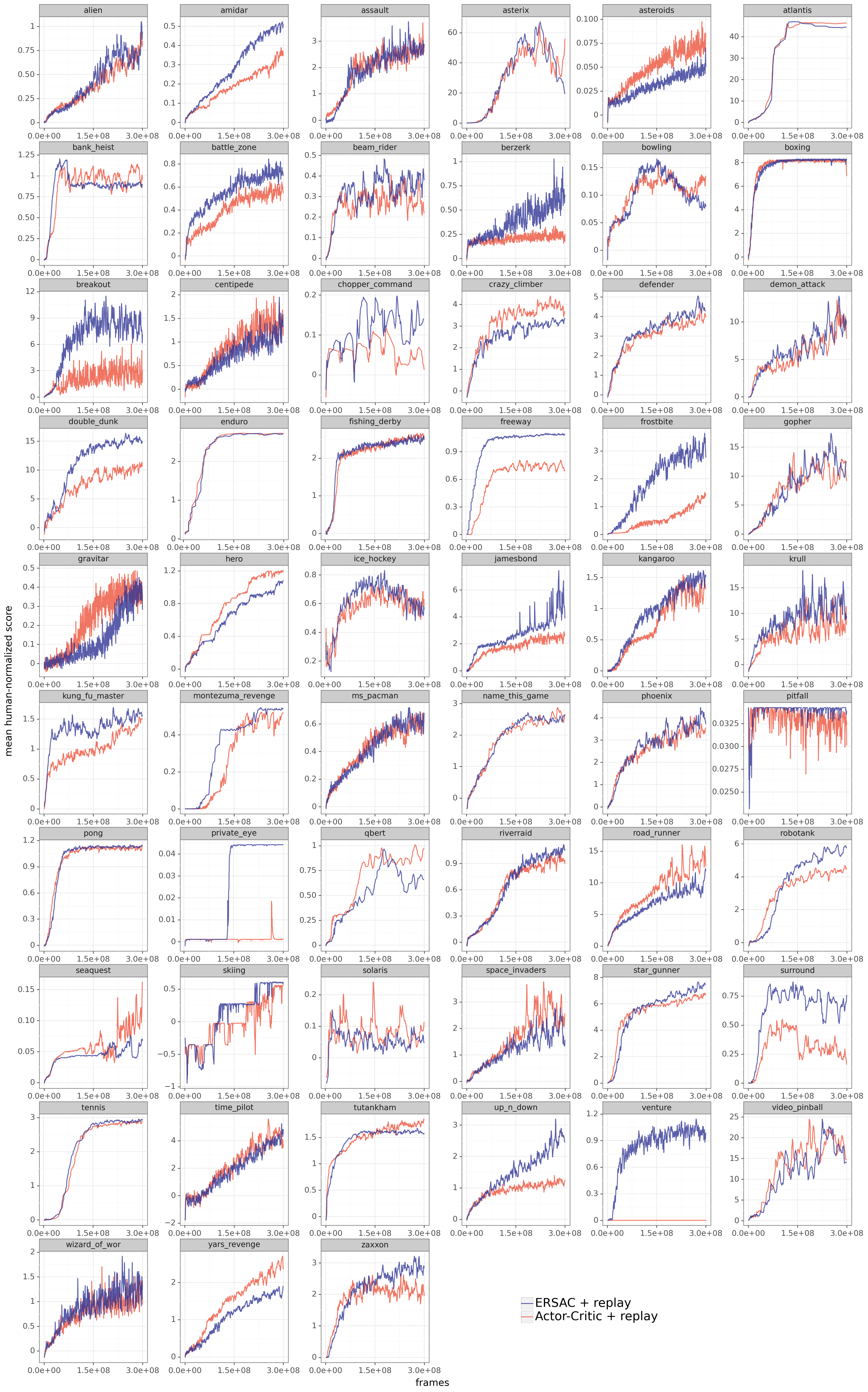}
    \caption{Performance of ERSAC and an actor-critic agent across all 57 Atari games.}
  \label{f-atari_all_games}
\end{center}
\end{figure}

\clearpage

\section{Future work}
\label{s-future-work}
We conclude with some discussion about future directions for this work. One question that this work raises is whether it is appropriate to have a single risk-seeking (entropy regularization) parameter $\tau$ for all states and actions \cite{ziebart2010modeling, neu2017unified, o2016pgq, nachum2017bridging, eysenbach2019if, haarnoja2018soft}. Some preliminary work \cite{o2021vbos} suggests that in fact it is both possible and advantageous to have a separate risk-seeking parameter for each state-action pair. In future work we may wish to investigate this. Simple actor-critic methods are no longer state-of-the-art, with most effective policy-based agents employing a range of different tactics to improve performance such as trust-regions, Q-value critics, natural gradients, model-based rollouts \etc~ An interesting extension would be to incorporate the techniques discussed here into those agents. We discussed at a high-level the regret of the formulation we derive in \S \ref{s-regret}, and showed empirical regret scaling results in Figure \ref{f-solve_time_v_depth}. In future work it would be interesting to combine the results of this work with theoretical results on the convergence rate of policy gradient algorithms to derive a concrete regret bound for a epistemic-risk-seeking policy-gradient algorithm.

\end{document}